\DeclareMathOperator*{\argmin}{arg\,min}
\def\ie{\mbox{\textit{i.e.}, }}
\def\eg{\mbox{\textit{e.g.}, }}
\def\PE{\mbox{PE}}
\def\pos{\mbox{pos}}
\def\sign{\mbox{sign}}
\def\mD{{\mathcal D}}
\def\mI{{\mathcal I}}
\def\mK{{\mathcal K}}
\def\mL{{\mathcal L}}
\def\mO{{\mathcal O}}
\def\mP{{\mathcal P}}
\def\mQ{{\mathcal Q}}
\def\mV{{\mathcal V}}
\def\mW{{\mathcal W}}
\def\mX{{\mathcal X}}
\def\mY{{\mathcal Y}}
\DeclareMathAlphabet\mathbfcal{OMS}{cmsy}{b}{n}
\def\0{{\bf 0}}
\def\1{{\bf 1}}
\def\bK{{\bm{K}}}
\def\bO{{\bm{O}}}
\def\bQ{{\bm{Q}}}
\def\bV{{\bm{V}}}
\def\bW{{\bm{W}}}
\def\bX{{\bm{X}}}
\def\bY{{\bm Y}}
\def\bb{{\bm b}}
\def\bu{{\bm u}}
\def\bv{{\bm v}}
\def\bw{{\bm w}}
\def\bx{{\bm x}}
\def\bz{{\bm z}}
\def\mmE{{\mathbb E}}
\def\mmP{{\mathbb P}}
\def\mmR{{\mathbb R}}
\newtheorem{deftn}{Definition}
\newtheorem{thm}{Theorem}
\newtheorem*{*thm}{Theorem}
\newtheorem{lemma}{Lemma}
\newtheorem*{*lemma}{Lemma}
\newenvironment*{proof}{\textbf{Proof}\quad}{\hfill $\square$\par}
\newcommand{\nipstophline}{%
	\noalign {\ifnum 0=`}\fi \hrule height 4pt
	\futurelet \reserved@a \@xhline
}
\newcommand{\nipsbottomhline}{%
	\noalign {\ifnum 0=`}\fi \hrule height 1pt
	\futurelet \reserved@a \@xhline
}
\def\red{\textcolor{red}}
\def\blue{\textcolor{blue}}
\DeclareRobustCommand{\cev}[1]{%
  {\mathpalette\do@cev{#1}}%
}
\newcommand{\do@cev}[2]{%
  \vbox{\offinterlineskip
    \sbox\z@{$\m@th#1 x$}
    \ialign{##\cr
      \hidewidth\reflectbox{$\m@th#1\vec{}\mkern4mu$}\hidewidth\cr
      \noalign{\kern-\ht\z@}
      $\m@th#1#2$\cr
    }%
  }%
}
\newcommand{\cao}[1]{{\color{black}#1}}
\newcommand{\tabincell}[2]{\begin{tabular}{@{}#1@{}}#2\end{tabular}}
\title{Video Super-Resolution Transformer}
\author{
	Jiezhang Cao$^{1}$ \qquad Yawei Li$^{1,}$\thanks{Co-first author.} \qquad Kai Zhang$^{1}$ \qquad Luc Van Gool$^{1,2}$ \\
	$^{1}$Computer Vision Lab, ETH Zürich, Switzerland \qquad $^{2}$KU Leuven, Belgium\\
	\{jiezhang.cao, yawei.li, kai.zhang, vangool\}@vision.ee.ethz.ch \\
}
\begin{document}
\maketitle

\begin{abstract}
Video super-resolution (VSR), with the aim to restore a high-resolution video from its corresponding low-resolution version, is a spatial-temporal sequence prediction problem. Recently, Transformer has been gaining popularity due to its parallel computing ability for sequence-to-sequence modeling. Thus, it seems to be straightforward to apply the vision Transformer to solve VSR. However, the typical block design of Transformer with a fully connected self-attention layer and a token-wise feed-forward layer does not fit well for VSR due to the following two reasons. First, the fully connected self-attention layer neglects to exploit the data locality because this layer relies on linear layers to compute attention maps. Second, the token-wise feed-forward layer lacks the feature alignment which is important for VSR since this layer independently processes each of the input token embeddings without any interaction among them. In this paper, we make the first attempt to adapt Transformer for VSR. Specifically, to tackle the first issue, we present a spatial-temporal convolutional self-attention layer with a theoretical understanding to exploit the locality information. For the second issue, we design a bidirectional optical flow-based feed-forward layer to discover the correlations across different video frames and also align features. Extensive experiments on several benchmark datasets demonstrate the effectiveness of our proposed method. The code will be available at \url{https://github.com/caojiezhang/VSR-Transformer}.

\end{abstract}

\section{Introduction}

Video super-resolution (VSR) refers to the task of enhancing low-resolution (LR) video to high-resolution (HR) one and has been successfully applied in some computer vision applications, such as video surveillance \cite{isobe2020video} and high-definition television \cite{tian2020tdan}. Generally, VSR can be formulated as a sequence modeling problem that can be solved by some sequence-to-sequence models, such as RNN \cite{chung2014empirical}, LSTM \cite{hochreiter1997long} 
and Transformer \cite{vaswaniattention}. 
Compared with RNN and LSTM, Transformer gains particular interest largely due to its recursion-free nature for parallel computing and modeling capacity for long-term dependencies of the input sequence.
Specifically, a Transformer block consists of two kinds of layers: a fully connected self-attention layer and a token-wise feed-forward layer, with skip connections in both layers.
Although Transformer has shown to work well for various computer vision tasks, directly applying it for VSR may suffer from two main limitations.

First, while the locality is well-known to be crucial for VSR, the fully connected self-attention (FCSA) layer neglects to leverage such information in a video sequence. 
Typically, most existing \cao{vision} Transformer methods \cao{(\eg ViT \cite{dosovitskiy2020image} and IPT \cite{chen2020pre}) split an image into several patches or tokens, which may damage the local spatial information \cite{li2021localvit} to some extent since the contents (\eg lines, edges, shapes, and even objects) are divided into different tokens.}
\cao{In addition}, this layer focuses on global interaction between the token embeddings by using several fully connected layers to compute attention maps which are irrelevant to local information.
As a result, the FCSA layer neglects to exploit the local information.
For VSR, it has been shown that temporal information is of vital importance.
When a region in some frames is occluded, the missing information can be recovered by other neighboring frames. 
Yet, it is still unclear for vision Transformer how to exploit the correlations among neighboring frames to improve the performance of VSR.

Second, the token-wise feed-forward layer cannot align features between video frames since this layer independently processes each of the input token embeddings without any interaction across them.
Although such interaction is contained in the FCSA layer, it ignores the feature propagation in video frames which contains rich bidirectional information. 
The feature propagation and feature alignment are crucial components in VSR since \cao{they help to exploit and align such information in a video sequence.}
However, most existing vision Transformer methods \cite{wu2020adversarial} lack both feature propagation and alignment. 
Without feature propagation, these methods generally fail to jointly capture past and future information.
As a result, the features may be unaligned to each frame, which leads to inferior performance. 
In short, explicit feature alignment and propagation mechanism are worth studying to improve the VSR performance.
In addition, it is impractical to directly use the fully connected linear layer to VSR since it has an expensive computational cost for many high-dimensional frames.
Therefore, it is very necessary and important to explore a new feed-forward layer to perform the feature propagation and alignment for VSR.

To address these two limitations, we propose a new Transformer for video super-resolution, called VSR-Transformer, which consists of a spatial-temporal convolutional self-attention (STCSA) layer and a bidirectional optical flow-based feed-forward (BOFF) layer. 
First, the STCSA layer exploits the locality from all token embeddings by introducing convolutional layers.
Then the BOFF layer learns the spatial-temporal information with the feature propagation and alignment.

The main contributions of this paper are summarized as follows:
\begin{itemize}[leftmargin=*]\setlength{\itemsep}{0pt}
    \item We propose a spatial-temporal convolutional attention layer to exploit the locality and spatial-temporal data information through different layers.
    We provide a theoretical analysis to support that our layer has an advantage over the fully connected self-attention layer.
    \item We design a new bidirectional optical flow-based feed-forward layer to use the interaction across all frame embeddings. This layer is able to improve the performance of VSR by performing feature propagation and alignment.
    Moreover, it alleviates the limitations of the traditional Transformer.
    \item We provide extensive experiments on several benchmark datasets to demonstrate the effectiveness of VSR-Transformer against state-of-the-art methods, especially for the limited number of frames.
\end{itemize}

\section{Related Work}
With the help of deep neural networks \cite{cao2018lccgan, cao2019multi, guo2020closed}, super-resolution (SR) which aims to reconstruct HR images/videos from LR images/videos has drawn significant attention.
Recently, several attempts use Transformer to solve SR.
For example, TTSR \cite{yang2020learning} proposes a texture Transformer by transferring HR textures from the reference image to the LR image.
IPT \cite{chen2020pre} develops a new pre-trained model to study the low-level computer vision task, including SR.
However, it is non-trivial and difficult to directly extend these Transformer-based image SR methods to VSR. 
Generally, existing VSR approaches \cite{chan2020basicvsr,isobe2020rsdn, isobe2020revisiting,jo2018deep, tian2020tdan,wang2019edvr} can be mainly divided into two frameworks: sliding-window and recurrent.

\paragraph{Video super-resolution.}
Earlier sliding window methods \cite{caballero2017real, tao2017detail, xue2019video} predict the optical flow between LR frames and perform the alignment by spatial warping. 
To improve the performance of VSR, 
TDAN \cite{tian2020tdan} uses deformable convolutions (DCNs) \cite{dai2017deformable, wang2019deformable} to adaptively align the reference frame and each supporting frame at the feature level. 
Motivated by TDAN, EDVR \cite{wang2019edvr} propose a video restoration framework by further enhancing DCNs to improve the feature alignment in a multi-scale fashion. 
This method first devises a pyramid cascading and deformable (PCD) alignment module to handle large motions and then uses a temporal and spatial attention (TSA) module to fuse important features.
To implicitly handle motions, DUF \cite{jo2018deep} leverages dynamic upsampling filters. 

In addition, some approaches take a recurrent framework. 
For example, RSDN \cite{isobe2020rsdn} proposes a recurrent detail-structural block to exploit previous frames to super-resolved the current frame.
RRN \cite{isobe2020revisiting} adopts a residual mapping between layers with identity skip connections to stabilize the training of
RNN and meanwhile to boost the super-resolution performance.
BasicVSR \cite{chan2020basicvsr} adopts a typical
bidirectional recurrent network coupled with a simple optical flow-based feature alignment for VSR.

\section{Preliminary and Problem Definition}
\paragraph{Notation.}
We use a calligraphic letter $\mX$ or $\mD$ to denote a sequence data or distribution, a bold upper case letter $\bX$ to denote a matrix, a bold lower case letter $\bx$ to denote a vector, a lower case letter $x$ to denote an element of a matrix. 
Let $\sigma_1(\cdot)$ be the softmax operator applied to each column of the matrix, \ie the matrix has non-negative elements with each column summing to 1.
Let $\sigma_2(\cdot)$ be a ReLU activation function, 
and let $\phi(\cdot)$ be a layer normalization function.
Let $[T]$ be a set $\{1, \ldots, T\}$.

To develop our method, we first give some definitions of the function distance and $k$-pattern function.
\begin{deftn}\textbf{\emph{(Function distance)}}
    Given a functions $f: \mmR^{d{\times}n} \to \mmR^{d{\times}n}$ and a target function $f^*: \mmR^{d{\times}n} \to \mmR^{d{\times}n}$, we define a distance between these two function as:
    \begin{align}
        \mL_{f^*, \mD}(f):= \mmE_{\bX \sim \mD} \left[ \ell(f(\bX), f^*(\bX)) \right].
    \end{align}
\end{deftn}
For a ground-truth $\bY = f^*(\bX)$, we denote the loss by $\mL_{\mD}(f)$.
To capture the locality of data, we define the $k$-pattern function as follows.

\begin{deftn}\textbf{\emph{($k$-pattern \cite{malach2021computational})}}
    \label{def:k_pattern}
    A function $f{:} \mX {\to} \mY$ is a $k$-pattern if for some $g{:} \{{\pm}1\}^{k} {\to} \mY$ and index $j^*${:} $f(\bx) = g(x_{j^* \ldots j^*{+}k})$.
    We call a function $h_{\bu, \bW}(\bx) = \sum_{j} \langle \bu^{(j)}, \bv^{(j)}_{\bW} \rangle$ can learn a $k$-pattern function from a feature $\bv^{(j)}_{\bW}$ of data $\bx$ with a layer $\bu^{(j)}\in\mmR^{q}$ if for $\epsilon>0$, we have $\mL_{f, \mD}(h_{\bu, \bW}) \leq \epsilon$.
\end{deftn}
Note that the feature $\bv^{(j)}_{\bW}$ can be learned by a convolutional attention network or a fully connected attention network parameterized by $\bW$.
This definition takes a vector as an example, and it can be extended to a matrix or a tensor.
In Definition \ref{def:k_pattern}, a $k$-pattern depends only on a small pattern of consecutive bits of the input.
Any function can learn the locality of data means that we should learn a hypothesis $h$ such that $\mL_{f, \mD}(h_{\bu, \bW}) \leq \epsilon$.

\paragraph{Video super-resolution.}
Let $\mD$ be a distribution of videos, and let $\{ \bV_{1}, \ldots, \bV_{T}\}\sim \mD$ be a low-resolution (LR) video sequence, where $\bV_t \in \mmR^{3{\times} W {\times} H}$ is the $t$-th LR frame.
We use a feature extractor to learn features $\mX = \{ \bX_{1}, \ldots, \bX_{T}\}$ from LR video frames, where $\bX_t \in \mmR^{C{\times} W {\times} H}$ is the $t$-th feature.
The goal of VSR is to learn a non-linear mapping $F$ to reconstruct high-resolution (HR) frames $\hat{\mY}$ by fully utilizing the spatial-temporal information across the sequence, \ie
\begin{align}
    \widehat{\mY} \triangleq \left(\widehat{\bY}_{1}, \ldots, \widehat{\bY}_{T}\right) = F (\bV_{1}, \ldots, \bV_{T}),
\end{align}
Given the ground-truth HR frames $\mY {=} \{ \bY_{1}, \ldots, \bY_{T}\}$, where $\bY_t$ is the $t$-th HR frame.
Then we minimize a loss function between the generated HR frame $\widehat{\bY}_t$ and the ground-truth HR frame $\bY_t$, \ie
\begin{align}
    \widehat{F} = \argmin\limits_{F} \mL_{\mD} \left(F\right) \triangleq 
    \widehat{\mmE}_{\mD, t \in [T]} \left[  d \left( \widehat{\bY}_t, \bY_t \right) \right],
\end{align}
where $\widehat{\mmE}[\cdot]$ is an empirical expectation, $d(\cdot, \cdot)$ is a distance metric, \eg $\ell_1$-loss, $\ell_2$-loss and Charbonnier loss \cite{wang2019edvr}.
For the VSR problem, one can use a sequence modeling method, such as RNN \cite{chung2014empirical}, LSTM \cite{hochreiter1997long} and Transformer \cite{vaswaniattention}.
In practice, Transformer gains particular interest
since it avoids recursion and thus allows parallel computing in practice.

\paragraph{Transformer block.}
A Transformer block is a sequence-to-sequence function, which consists of a self-attention layer and a token-wise feed-forward layer with both having a skip connection.
Specifically, given an input $\bX \in \mmR^{d \times n}$ consisting of $d$-dimensional embeddings of $n$ tokens, the Transformer block map a sequence $\mmR^{d \times n}$ to another sequence $\mmR^{d \times n}$, respectively, \ie
\begin{align}
    f_{1}(\bX) &= \phi \left( \bX + \sum\nolimits_{i=1}^h \bW_o^i (\bW_v^i \bX) \sigma_1((\bW_k^i \bX)^{\top}(\bW_q^i \bX)) \right),\\
    f_{2}(\bX) &= \phi \left( f_{1}(\bX) + \bW_2 \sigma_2(\bW_1 \cdot f_{1}(\bX) + \bb_1 \1_n^{\top}) + \bb_2 \1_n^{\top}\right), \label{eqn:FC_feedforward}
\end{align}
where 
$\bW^i_v, \bW^i_k, \bW^i_q \in \mmR^{m \times d}$ are linear layers mapping an input to value, key and query, respectively.
Also, $\bW_o^i {\in} \mmR^{d {\times} m}$, $\bW_1 {\in} \mmR^{r {\times} d}, \bW_2 {\in} \mmR^{d {\times} r}$ are linear layers, and $\bb_1 \in \mmR^r, \bb_2 {\in} \mmR^d$ are bias.
Here, $h$ is the number of heads, $m$ is the head size, and $r$ is the hidden layer size of the feed-forward layer.

However, it is non-trivial to apply Transformer to VSR due to some intrinsic limitations.
i) The fully connected self-attention layer neglects to leverage locality information in a video.  
ii) The token-wise feed-forward layer independently processes each of the input token embeddings, leading to misaligned features.
To address these, we propose a new Transformer for VSR.

\begin{figure}[t]
\centering
{
	\includegraphics[width=0.95\linewidth]{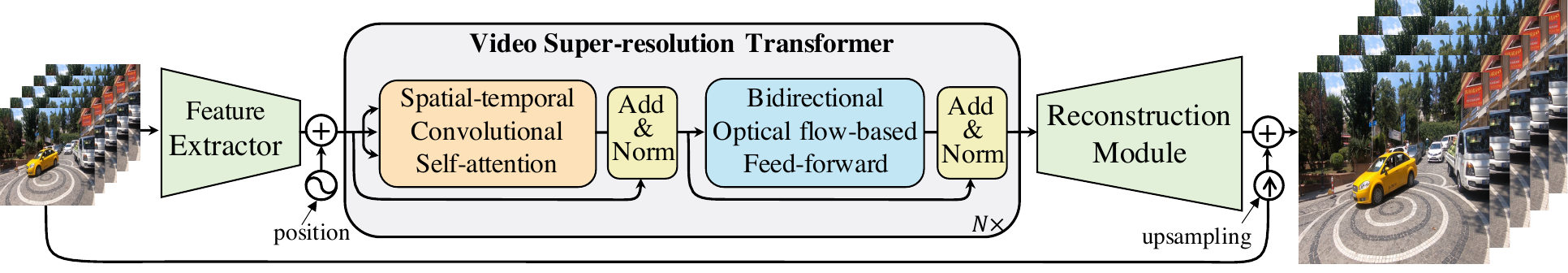}
	\caption{The framework of video super-resolution Transformer. Given a low-resolution (LR) video, we first use an extractor to capture features of the LR videos.
    Then, a spatial-temporal convolutional self-attention and an optical flow-based feed-forward network model a sequence of continuous representations.
    Note that these two layers both have skip connections.
    Last, the reconstruction network restores a high-resolution video from the representations and the upsampling frames.
	}
	\label{fig: framework}
}
\end{figure}
	
\section{Video Super-Resolution Transformer}

In this paper, we aim to propose a new Transformer for the video super-resolution problem, called VSR-Transformer. 
As illustrated in Figure \ref{fig: framework}, our proposed method consists of a feature extractor, a Transformer encoder, and a reconstruction network.
Specifically, given a sequence of videos, we first use a stack of residual blocks to extract features of the videos.
Then, the VSR-Transformer encoder maps the features to a sequence of continuous representations.
Last, the reconstruction module restores a high-resolution video from the representations.

\subsection{Spatial-Temporal Convolutional Self-attention}\label{sec:stcsa}

To verify the drawback of the fully connected self-attention (FCSA) layer, we first provide a theoretical analysis for whether it learns $k$-patterns with gradient descent. 
Let $\mD$ be the uniform distribution, and let $f(\bX)=\Pi_{i,j\in\mI} x_{i,j}$, where $\mI$ is some set of $k$ consecutive bits.
When a fully connected attention layer is initialized as a permutation invariant distribution, the initial gradient is very small.
Specifically, we have the following theorem, 
\begin{thm}\label{thm:fcsa}
    Let $n$ be the size of image and $q$ be the size of $u$. We assume $m=1$ and $|u_i|\leq 1$. and the weights are initialized as some permutation invariant distribution over $\mmR^n$, and for all $\bx$ we have $h_{\bu, \bW}^{\emph{FCSA}}(\bx) \in [-1, 1]$ which satisfies Definition \ref{def:k_pattern}. Then, the following holds:
    \begin{align}
        \mmE_{\bW \sim \mW} \left\| \frac{\partial}{\partial \bW} \mL_{f, \mD}\left(h_{\bu, \bW}^{\emph{FCSA}}\right) \right\|_2^2 
        &\leq q n \min\left\{\begin{pmatrix} n{-}1 \\ k \end{pmatrix}^{{-}1}, \begin{pmatrix} n{-}1 \\ k{-}1 \end{pmatrix}^{{-}1}  \right\}.
    \end{align}
\end{thm}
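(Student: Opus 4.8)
The plan is to adapt the Fourier-analytic argument of Malach \etal\ showing that fully connected architectures exhibit vanishing gradients at initialization when the target is a high-degree parity-type function. The crucial observation is that, after vectorizing the input and identifying entries with $\pm 1$ values, the target $f(\bX)=\Pi_{i,j\in\mI}x_{i,j}$ is a product (parity) $\chi_\mI$ over exactly $k$ coordinates under the uniform distribution $\mD$, and such parities form an orthonormal basis. By the chain rule, the gradient of the loss with respect to any scalar weight $w$ in $\bW$ reduces to a correlation of the form $\mmE_{\bX}\!\left[f(\bX)\,\partial_w h_{\bu,\bW}^{\emph{FCSA}}(\bX)\right]$, which is precisely the Fourier coefficient $\widehat{g_w}(\mI)$ of the function $g_w := \partial_w h_{\bu,\bW}^{\emph{FCSA}}$ evaluated on the set $\mI$. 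The whole theorem therefore amounts to controlling $\mmE_{\bW\sim\mW}\widehat{g_w}(\mI)^2$ and summing over weights.

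First I would write out $\partial_w h_{\bu,\bW}^{\emph{FCSA}}$ explicitly from the definition $h_{\bu,\bW}(\bx)=\sum_j \langle \bu^{(j)}, \bv^{(j)}_{\bW}\rangle$, where $\bv^{(j)}_{\bW}$ is the softmax-weighted fully connected attention feature, specializing to $m=1$. Then I would invoke the permutation invariance of $\mW$: because the law of $\bW$ is invariant under relabeling of the $n$ token positions, $\mmE_{\bW}\widehat{g_w}(\mI)^2$ equals the average of $\mmE_{\bW}\widehat{g_w}(S)^2$ taken over all $k$-subsets $S$ reachable from $\mI$ by a coordinate permutation. Since the weight $w$ (equivalently, the summation index $j$) pins one particular coordinate, the admissible subsets range over either $\binom{n-1}{k}$ choices (when the pinned coordinate lies outside the pattern) or $\binom{n-1}{k-1}$ choices (when it lies inside), which is the origin of the two terms inside the $\min$.

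Next I would apply Parseval's identity, $\sum_{S}\widehat{g_w}(S)^2 = \mmE_{\bX}[g_w(\bX)^2] = \|g_w\|_2^2$, so that averaging over the admissible same-size subsets yields $\mmE_{\bW}\widehat{g_w}(\mI)^2 \le \binom{n-1}{k}^{-1}\,\mmE_{\bW}\|g_w\|_2^2$, and analogously with $k-1$. The hypotheses $|u_i|\le 1$ and $h_{\bu,\bW}^{\emph{FCSA}}(\bx)\in[-1,1]$ bound $\mmE_{\bW}\|g_w\|_2^2$ by an absolute constant, and summing the contribution over the $q$ coordinates of $\bu$ and the $n$ positions produces the overall $qn$ prefactor, giving the claimed bound.

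The main obstacle I anticipate is the bookkeeping in the first step: differentiating the softmax-based attention feature and verifying that the resulting $g_w$, viewed as a function on $\{\pm 1\}^n$, places no Fourier mass \emph{specifically} on $\mI$ beyond what permutation symmetry forces — that is, the only route by which the gradient can correlate with the target is the symmetric spreading of its energy across all subsets of a given size. Pinning down the exact exponents $k$ versus $k-1$ (rather than a crude $\binom{n}{k}$) requires carefully tracking which single coordinate is fixed by the choice of weight, and this delicate combinatorial step, together with confirming that the cross terms from $\ell'$ do not reintroduce alignment with $\chi_\mI$, is where the argument demands the most care.
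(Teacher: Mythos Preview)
Your proposal is correct and follows essentially the same route as the paper's proof: identify the gradient entry as a Fourier coefficient $\langle g_j,\chi_{\mI}\rangle_{\mD}$, exploit permutation invariance of the weight distribution to replace $\mI$ by any $\pi(\mI)$ with $\pi(j)=j$, apply Parseval to bound the sum of squared coefficients by $\|g_j\|_{\mD}^2\le 1$, and then sum over the $qn$ scalar weights. The only clarification worth making is that the ``bookkeeping obstacle'' you anticipate largely evaporates under the assumption $m=1$: the softmax $\sigma_1((\bW_k\bx)^{\top}(\bW_q\bx))$ becomes a $1\times 1$ matrix equal to $1$, so the attention feature collapses to a single inner product $\langle \bw^{(i)},\bx\rangle$ followed by $\sigma_2$, and the derivative $g_j(\bx)=x_j u_i\,\sigma_2'(\langle\bw^{(i)},\bx\rangle)$ is immediately bounded pointwise by $1$ without any appeal to $h\in[-1,1]$ beyond activating the hinge loss.
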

\begin{proof}
    Please see the proofs in the supplementary materials.
\end{proof}

From this theorem, the initial gradient is small if $k=\Omega(\log n)$.
When $q$ is not sufficiently large, the fully connected attention layer may result in the gradient vanishing issue. 
It implies that the gradient descent will be “stuck” upon the initialization, and thus will fail to learn the $k$-pattern function.
Therefore, the fully connected self-attention layer cannot use the spatial information of each frame since the local information is not encoded in the embeddings of all tokens.
Moreover, this issue may become more serious when directly using such layers in video super-resolution.

To address this, we propose a new spatial-temporal convolutional self-attention (STCSA) layer.
As illustrated in Figure \ref{fig: attention}, given the feature maps of input video frames $\mX \in \mmR^{T{\times}C{\times}W{\times}H}$, we use three independent convolutional neural networks $\mW_q, \mW_k $ and $\mW_v$ to capture the spatial information of each frame.
Here, the kernel size is $3 \times 3$, the stride is 1 and padding is 1.
Different from Vision Transformer (ViT) \cite{dosovitskiy2020image}, it uses a linear projection to extract several patches when taking an image as an input.
In contrast, motivated by COLA-Net \cite{mou2021cola}, we use the unfold operation to extract sliding local patches with stride $s$ and patch size of $W_p {\times} H_p$ after inputting each frame to our Transformer.
Then, we obtain three groups of 3D patches, and each group has $N{=}TWH/(W_p H_p)$ patches with the dimension of each patch being $d{=}C{\times}W_p{\times}H_p$.
Then, we generate query, key and value $\mQ, \mK, \mV \in \mmR^{TN{\times}C{\times}W_p{\times}H_p}$, \ie
\begin{align}
    \mQ = \kappa_1(\mW_q * \mX),\quad \mK = \kappa_1(\mW_k * \mX),\quad \mV = \kappa_1(\mW_v * \mX),
\end{align}
where $\kappa_1(\cdot)$ is a unfold operation.
Next, we reshape each patch into a new query matrix and key matrix 
$\bQ=\tau (\mQ)$ and $\bK =\tau (\mK)$ with the size of $d{\times}N$, where $\tau(\cdot)$ is a composition of the unfold and reshape operation.
Then, we calculate the similarity matrix $\sigma_1 (\bQ^{\top} \bK)$ and aggregate with the value ${\mV}$ to obtain a feature map.
Note that the similarity matrix is related to all embedding tokens of the whole video frames.
Therefore, it implies that the spatial-temporal information is captured in our proposed layer.
Last, we use the fold operation $\kappa_2(\cdot)$ to combine these tensors of updating sliding local patches into a feature map with the size of $C{\times}T{\times}W{\times}H$ and obtain the final feature map by using an output layer $\mW_o$.
This process can be viewed as the inverse process of the unfold operation.
Summarizing the above, we define the spatial-temporal convolutional self-attention layer as:
\begin{align}
    f_1(\mX) &= \phi \left( \mX + \sum\nolimits_{i=1}^h \mW_o^i * \kappa_2\left( \underbrace{\kappa_1(\mW_v^i * \mX)}_{\mV} \sigma_1(\underbrace{\kappa_1^{\tau}(\mW_k^i * \mX)}_{\mK} ~\!\!\!^{\top} \underbrace{\kappa_1^{\tau}(\mW_q^i * \mX)}_{\mQ}) \right) \right),
\end{align}
where $\kappa_1^{\tau}(\cdot){=} \tau{\circ} \kappa_1(\cdot)$ is a composition of the reshape operation $\tau$ and the fold operation $\kappa_1$.
In the experiment, we use a single head (\ie $h{=}1$) to achieve good performance.
By using our spatial-temporal convolutional attention layer, we next provide the following theorem for the STCSA layer about how to learn $k$-patterns with gradient descent.

\begin{thm}\label{thm:stcsa}
    Assume we initialize each element of weights uniformly drawn from $\{\pm 1/k\}$.
    Fix some $\delta >0$, some $k$-pattern $f$ and some distribution $\mD$. Then is $q>2^{k+3} \log (2^k/\delta)$, and let $h_{\bu^{(s)}, \mW^{(s)}}^{\emph{STCSA}}$ be a function satisfying Definition \ref{def:k_pattern},
    with probability at least $1-\delta$ over the initialization, when training a spatial-temporal convolutional self-attention (STCSA) layer using gradient descent with $\eta$, we have
    \begin{align}
        \frac{1}{S} \sum_{s=1}^{S} \mL_{f, \mD} \left(h_{\bu^{(s)}, \mW^{(s)}}^{\emph{STCSA}} \right) \leq \eta^2 S^2 nk^{5/2} 2^{k+1} + \frac{k^2 2^{2k+1}}{q \eta S} +  \eta nqk.
    \end{align}
\end{thm}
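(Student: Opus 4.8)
The plan is to follow the feature-learning analysis for convolutional models of \cite{malach2021computational}, adapting it to the convolutional attention features produced by the STCSA layer, and to contrast it sharply with the vanishing-gradient obstruction of Theorem \ref{thm:fcsa}. Writing the predictor in the form of Definition \ref{def:k_pattern} as $h^{\text{STCSA}}_{\bu,\mW}(\bx)=\sum_j\langle\bu^{(j)},\bv^{(j)}_{\mW}(\bx)\rangle$, the key structural fact I would exploit is that the convolution inside the attention exposes each $k$-window of the input to a shared filter bank, so that the gradient of $\mL_{f,\mD}$ with respect to the filters $\mW$ at initialization is $\Omega(1)$ rather than exponentially small --- exactly the opposite of the FCSA situation captured by Theorem \ref{thm:fcsa}.

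First I would establish a coverage lemma for the initialization. With each filter coordinate drawn uniformly from $\{\pm 1/k\}$, a single filter realizes a prescribed sign pattern in $\{\pm 1\}^{k}$ with probability $2^{-k}$; by a Chernoff bound together with a union bound over the $2^{k}$ possible patterns, the hypothesis $q>2^{k+3}\log(2^{k}/\delta)$ guarantees that, with probability at least $1-\delta$, every pattern is realized by enough filters (the extra factor of $8$ giving the slack needed for the multiplicative bound). On this event I would construct an explicit comparator readout $\bu^{*}$ that selects and averages the filters matching each pattern so that $h^{\text{STCSA}}_{\bu^{*},\mW}$ reproduces $g$ and hence $f$; counting the $\approx q/2^{k}$ filters per pattern and the $1/k$ scaling of the filters yields the norm bound $\|\bu^{*}\|^{2}\lesssim k^{2}2^{2k}/q$, which is the source of the second term.

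Next I would run the gradient-descent analysis in two coupled pieces. For the linear layer $\bu$ the loss is convex, so I would apply a standard online-gradient-descent regret bound, $\tfrac1S\sum_{s}\mL_{f,\mD}(h^{\text{STCSA}}_{\bu^{(s)},\mW^{(s)}})\le \mL_{f,\mD}(h^{\text{STCSA}}_{\bu^{*},\mW})+\tfrac{\|\bu^{*}\|^{2}}{2\eta S}+\tfrac{\eta}{2}\max_{s}\|\nabla_{\bu}\mL\|^{2}$; bounding the per-step gradient norm by $O(nqk)$ (the $n$ windows times the $q$ bounded feature coordinates times the $k$-dependent scaling) produces the third term $\eta nqk$, while the comparator norm produces the second term. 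The remaining ingredient is to control the drift of the filters $\mW^{(s)}$: unlike idealized analyses that freeze the features, here $\mW$ keeps moving, so I would bound the per-step filter update by $O(\eta)$, accumulate it to a total drift of $O(\eta S)$, and propagate it through the readout and the $2^{k}$-pattern structure to obtain the excess error $\eta^{2}S^{2}nk^{5/2}2^{k+1}$, the first term. Summing the three contributions gives the stated bound.

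I expect the main obstacle to be the feature-drift accounting in the last step: proving that the convolutional filters stay close enough to their initialization over all $S$ iterations that the comparator $\bu^{*}$ built at initialization remains nearly optimal, while simultaneously keeping the drift error sub-dominant. This requires carefully tracking how the filter gradients scale with $k$ and $2^{k}$, and balancing the quadratic-in-$\eta S$ drift against the $1/(\eta S)$ regret term --- the interplay that forces the three-term trade-off and ultimately dictates the admissible ranges of $\eta$ and $S$. By contrast, the concentration step is routine and the convexity/regret step is standard; it is the coupling of feature learning with the moving-target readout that carries the real work.
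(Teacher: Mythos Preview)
Your plan matches the paper's proof almost exactly: the three ingredients are the coverage/separability lemma at initialization (the paper's Lemma~2), the online-gradient-descent regret bound applied to the convex-in-$\bu$ loss (the paper's Theorem~3), and the filter-drift control (the paper's Lemmas~1 and~3), producing respectively the second, third, and first terms of the bound.

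The one place where your accounting does not line up is the drift step. You write ``bound the per-step filter update by $O(\eta)$, accumulate it to a total drift of $O(\eta S)$,'' which would give a first term linear in $\eta S$, not the stated $\eta^{2}S^{2}$. The paper's mechanism is that the gradient $\nabla_{\mW}\mL$ is itself proportional to $\|\bu^{(s)}\|$; since $\bu^{(0)}=0$ and $\|\nabla_{\bu^{(j)}}\mL\|\le\sqrt{qk}$, one has $\|\bu^{(j,s)}\|\le \eta s\sqrt{qk}$, so the per-step filter gradient is already $O(\eta s\, nk\sqrt{q})$, and summing gives $\|\mW^{(S)}-\mW^{(0)}\|\le \eta^{2}S^{2}nk\sqrt{q}$. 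Multiplying this by $\sqrt{k}\,\|\bu^{*}\|\le \sqrt{k}\cdot 2^{k+1}k/\sqrt{q}$ in the Lipschitz bound then yields $\eta^{2}S^{2}nk^{5/2}2^{k+1}$. You land on the correct quadratic-in-$\eta S$ scaling in your final paragraph, but the argument as you sketched it does not produce it; the missing observation is that $\bu$ starts at zero and grows, which is what makes the filter drift second order rather than first.
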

\begin{proof}
    Please see the proofs in the supplementary materials.
\end{proof}
From the theorem, the loss $\mL_{f, \mD} (h_{\bu^{(s)}, \mW^{(s)}}^{\emph{STCSA}})$ can be small with finite $S$ steps in the optimization, and thus the spatial-temporal convolutional self-attention layer using gradient descent is able to learn the $k$-pattern function.
It implies that our proposed layer captures the locality of each frame.
These results verify that our spatial-temporal convolutional self-attention layer achieves an advantage over the fully connected self-attention layer.

\begin{figure}[t]
\centering
{
	\includegraphics[width=1\linewidth]{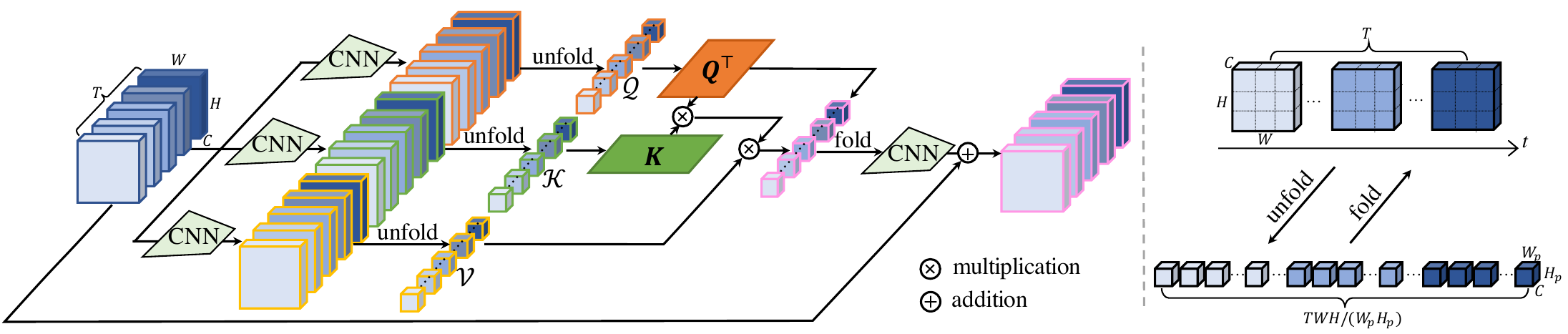}
	\caption{Illustration of the spatial-temporal convolutional self-attention. 
	The unfold operation is to extract sliding local patches from a batched input feature map, while the fold operation is to combine an array of sliding local patches into a large feature map.
	}
	\label{fig: attention}
}
\end{figure}

\paragraph{Spatial-temporal positional encoding.}
The architecture of the proposed VSR-Transformer is permutation-invariant, while the VSR task requires precise spatial-temporal position information.
To address this, we propose to use 3D fixed positional encodings  \cite{wang2020end} and add them to the input of the attention layer.
Specifically, the positional encodings contain two spatial positional information (\ie horizontal and vertical) and one temporal positional information.
Then, we formulate the spatial-temporal positional encoding (PE) as follows:
\begin{align}
    \PE(\pos, i) = 
    \left\{
    \begin{array}{ll}
    \!\!\!\sin(\pos \cdot \alpha_k), & \text{for}~~ i = 2k, \\
    \!\!\!\cos(\pos \cdot \alpha_k), & \text{for}~~ i = 2k+1,
    \end{array} 
    \right.
\end{align}
where $\alpha_k{=}1/10000^{2k/\frac{d}{3}}$, $k$ is an integer in $[0, d/6)$, `$\pos$' is the position in the corresponding dimension, and $d$ is the size of the channel dimension.
Note that the dimension $d$ should be divisible by 3 since the positional encodings of the three dimensions should be concatenated to form the final $d$ channel positional encodings.

\begin{figure}[t]
\centering
{
	\includegraphics[width=1\linewidth]{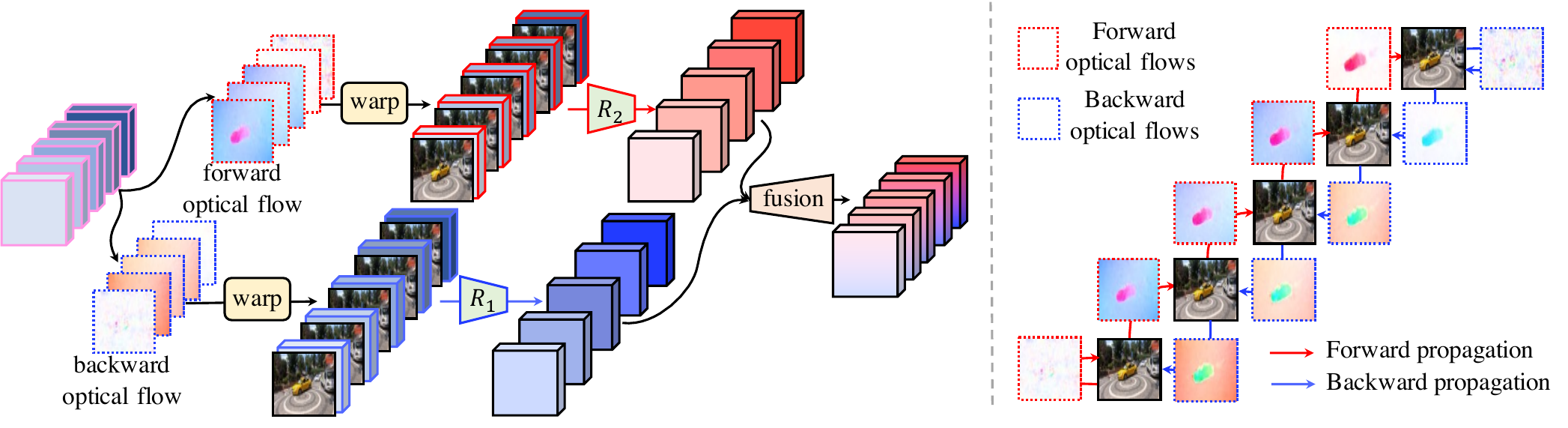}
	\caption{Illustration of the bidirectional optical flow-based feed-forward layer. 
	Given a video sequence, we first bidirectionally estimate the forward and backward optical flows and wrap the feature maps with the responding optical flows.
	Then we learn a forward and backward propagation network to produce two sequences of features from concatenated wrapped features and LR frames.
	Last, we fusion these two feature sequences into one feature sequence.
	}
	\label{fig:feedforward}
}
\end{figure}

\subsection{Bidirectional Optical Flow-based Feed-Forward}
The fully connected feed-forward layer in the traditional Transformer consists of two linear layers with a ReLU activation in between, which is applied to each token separately and identically.
In this way, this layer neglects to exploit the correlations among tokens of different frames, which may lead to poor performance. 
To address this, we propose to model the correlations among all frames. 
Motivated by flow-based methods \cite{chan2020basicvsr}, we propose a bidirectional optical flow-based feed-forward layer by using optical flow for spatial alignment, as shown in Figure \ref{fig:feedforward} (left).
Specifically, given feature maps $\mX$ outputted by the spatial-temporal convolutional self-attention layer, we first learn bidirectional optical flow $\cev{\mO}$ and $\vec{\mO}$ between neighboring frames. 
Then, we obtain backward features $\cev{\mX}$ and forward features $\vec{\mX}$ by using a warping function $\omega(\cdot, \cdot)$ along with the backward and forward propagation, \ie 
\begin{align}
    \cev{\mX} = \omega \left(\mX, \cev{\mO} \right), \quad \vec{\mX} = \omega \left(\mX, \vec{\mO} \right),
\end{align}
where $\cev{\mO}, \vec{\mO} \in \mmR^{T{\times}2{\times}W{\times}H}$ are backward and forward optical flows, respectively.
In practice, we use SPyNet \cite{ranjan2017optical} as a function $s(\cdot, \cdot)$ to bidirectionally estimate the optical flows, \ie 
\begin{align}
    \cev{\bO}_t = 
    \left\{
    \begin{array}{ll}
    \!\!\!s(\bV_{1}, ~~\bV_1),       & \text{if}~~ t = 1, \\
    \!\!\!s(\bV_{t\!{-}\!1}, \bV_t), & \text{if}~~ t \in (1, T],
    \end{array} 
    \right.
    \qquad
    \vec{\bO}_t = 
    \left\{
    \begin{array}{ll}
    \!\!\!s(\bV_{t\!{+}\!1}, \bV_{t}), & \text{if}~~ t \in [1, T), \\
    \!\!\!s(\bV_T, ~\bV_T),            & \text{if}~~ t = T,
    \end{array} 
    \right.
\end{align}
where $\cev{\bO}_t$ and $\vec{\bO}_t$ are the $t$-th element of $\cev{\mO}$ and $\vec{\mO}$, respectively. 
Note that the function $s(\cdot, \cdot)$ is pre-trained and updated in the training.  
Here, we estimate the identical optical flow at the start and end of a video for the backward and forward propagation, respectively, as shown in Figure \ref{fig:feedforward} (right).
Then, we aggregate the video frames and warped feature maps to maintain the video information.
To learn the correlation among neighboring frames, we propose to use convolutional backward and forward propagation networks.
We modify the fully connected feed-forward layer (\ie Eqn.  (\ref{eqn:FC_feedforward})) as:
\begin{align}
    f_2(\mX) &= \phi \left( f_1(\mX) + \rho \left(\underbrace{\cev{\bW}_1 * \sigma_2 \left(\cev{\bW}_2 * \left[\mV, \cev{\mX}\right]\right)}_{\text{backward propagation}} + \underbrace{\vec{\bW}_1 * \sigma_2 \left(\vec{\bW}_2 * \left[\mV, \vec{\mX}\right]\right)}_{\text{forward propagation}} \right) \right),
    \label{eqn:f2_twolayer}
\end{align}
where $\rho(\cdot)$ is a fusion module. 
Note that we take a two-layered network as an example, $\cev{\bW}_1$ and $\cev{\bW}_2$ are the weights of the backward propagation network, and $\vec{\bW}_1$ and $\vec{\bW}_2$ are the weights of the forward propagation network.
In practice, we extend the case of two-layered networks to multi-layered neural networks $R_1$ and $R_2$, then we rewrite Eqn. (\ref{eqn:f2_twolayer}) as follows: 
\begin{align}
    f_2(\mX) = \phi \left( f_1(\mX) + \rho \left( {R}_1\left(\mV, \cev{\mX}\right) + {R}_2\left(\mV, \vec{\mX}\right) \right) \right),
\end{align}
where $R_1$ and $R_2$ are flexible networks, and we set them to be a stack of Residual ReLU networks in the experiment.
Compared with ViT \cite{dosovitskiy2020image}, our model is able to capture the correlation among different frames.
Different from BasicVSR \cite{chan2020basicvsr}, it recurrently estimates the optical flows and features.
In contrast, our VSR-Transformer avoids recursion and thus allows parallel computing.


\newpage
\section{Experiments}\label{sec:experiment}

\begin{table*}[!t]
	\caption{Quantitative comparison (PSNR/SSIM) on \textbf{REDS4} for $4\times$ VSR. The results are tested on RGB channels. {Red} and {blue} indicate the best and the second best performance, respectively. `$\dagger$' means a method trained on \textbf{5 frames} for a fair comparison.}
	\label{tab:reds}
	\begin{center}
		\scalebox{0.8}{
			\begin{tabular}{l|c||c|c|c|c|c}
				\hline
				Method & Params (M) & Clip\_000 & Clip\_011 & Clip\_015 & Clip\_020 & Average (RGB)  \\ \hline
				Bicubic & - & 24.55/0.6489 & 26.06/0.7261 & 28.52/0.8034 & 25.41/0.7386 & 26.14/0.7292  \\
				RCAN~\cite{zhang2018image} & - & 26.17/0.7371 & {29.34}/{0.8255} & {31.85}/{0.8881} & {27.74}/{0.8293} & {28.78}/0.8200  \\ \hline
				TOFlow~\cite{xue2019video} & - & 26.52/0.7540 & 27.80/0.7858 & 30.67/0.8609 & 26.92/0.7953 & 27.98/0.7990 \\
				DUF~\cite{jo2018deep} & 5.8 & {27.30}/{0.7937} & 28.38/0.8056 & 31.55/0.8846 & 27.30/0.8164 & 28.63/{0.8251} \\
				{EDVR-M}~\cite{wang2019edvr} & 3.3 & {27.75}/{0.8153} & {31.29}/{0.8732} & {33.48}/{0.9133} & {29.59}/{0.8776} & {30.53}/{0.8699} \\ 
				{EDVR-L}~\cite{wang2019edvr} & 20.6 & \blue{\bf{28.01}}/\blue{\bf{0.8250}} & \blue{\bf{32.17}}/\blue{\bf{0.8864}} & \blue{\bf{34.06}}/\red{\bf{0.9206}} & \blue{\bf{30.09}}/\blue{\bf{0.8881}} & \blue{\bf{31.09}}/\blue{\bf{0.8800}} \\ 
				{BasicVSR$^\dagger$~\cite{chan2020basicvsr}} & 6.3 & 27.67/{0.8114} & 31.27/0.8740 & 33.58/0.9135 & {29.71}/{0.8803} & 30.56/0.8698  \\ 
				{IconVSR$^\dagger$~\cite{chan2020basicvsr}} & 8.7 & {27.83}/{0.8182} & 31.69/0.8798 & {33.81}/{0.9164} & 29.90/{0.8841} & 30.81/{0.8746}  \\
				\hline
				\textbf{VSR-Transformer} & 32.6 & \red{\bf{28.06}}/\red{\bf{0.8267}} & \red{\bf{32.28}}/\red{\bf{0.8883}} & \red{\bf{34.15}}/\blue{\bf{0.9199}} & \red{\bf{30.26}}/\red{\bf{0.8912}} & \red{\bf{31.19}}/\red{\bf{0.8815}}  \\ \hline
			\end{tabular}
		}
		
	\end{center}
\end{table*}

\begin{figure}[t]
\centering
{
	\includegraphics[width=1\linewidth]{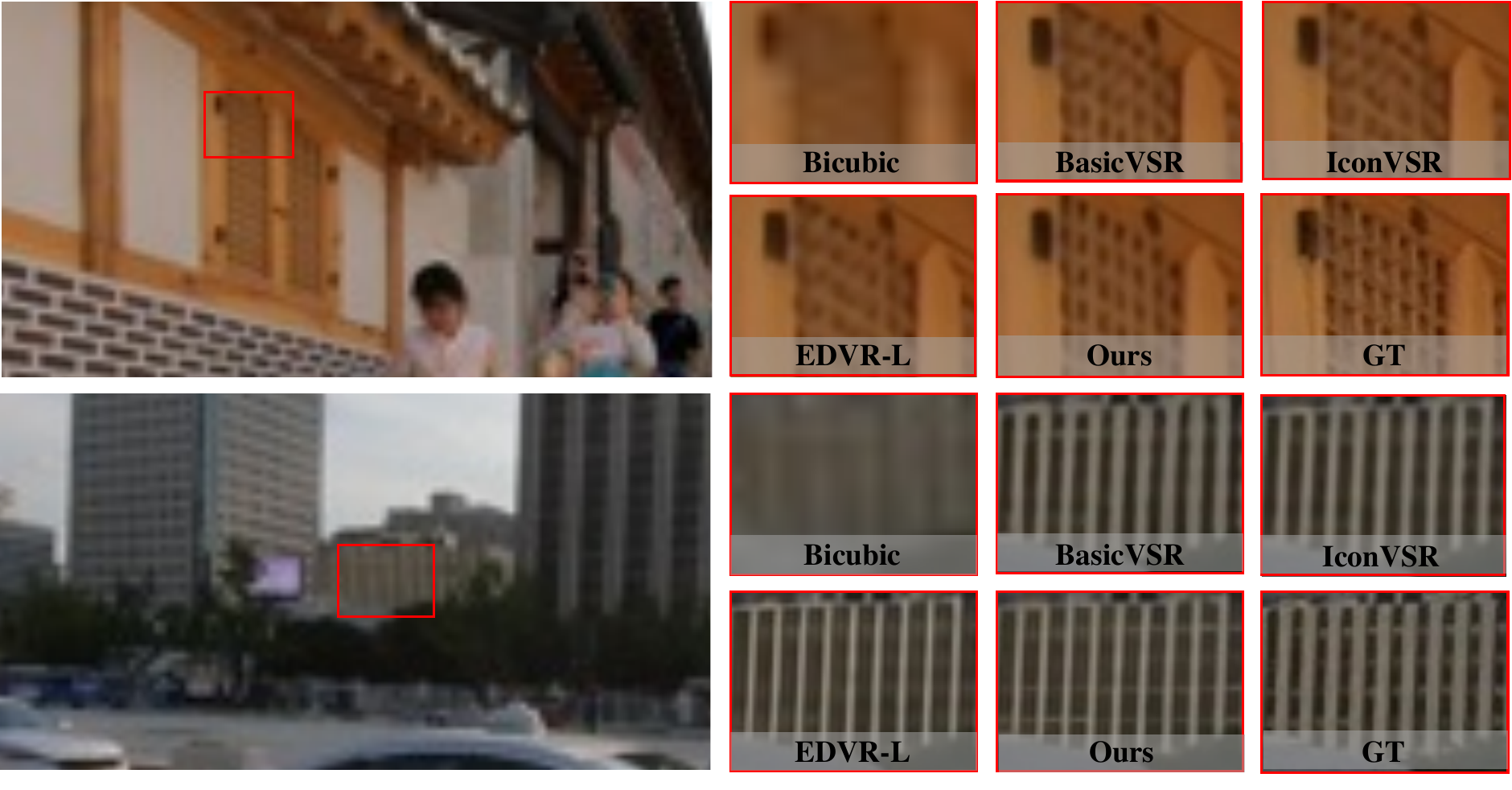}
	\caption{Qualitative comparison on the REDS4 dataset for $4\times$ VSR. Zoom in for the best view.
	}
	\label{fig: reds1}
}
\end{figure}

\textbf{Training datasets.}
\textbf{(i) REDS} \cite{nah2019ntire} contains 240 training clips, 30 validation clips and 30 testing clips, where each with 100 consecutive frames.
According to EDVR \cite{wang2019edvr}, we use REDS4 as the test set which contains the 000, 011, 015, and 020 clips.
The remaining training and validation clips are
regrouped as our training dataset which has 266 clips. 
For fair comparisons, all VSR models are trained on 5 frames.
\textbf{(ii) Vimeo-90K} \cite{xue2019video} 
consists of 4,278 videos with 89,800 high-quality video clips (\ie 720p or higher) collected from Vimeo.com, which covers a large variety of scenes and actions. 
We use Vid4 \cite{liu2013bayesian} and Vimeo-90K-T \cite{xue2019video} as test sets.


\textbf{Evaluation metrics.}
We use Peak Signal-to-Noise Ratio (PSNR) and Structural Similarity Index (SSIM) \cite{wang2004image} to evaluate the quality of images generated by the VSR methods. 
To measure the efficiency of different networks, we also compare the model sizes. 

\textbf{Experiment details.}
We compare our VSR-Transformer with the following state-of-the-art VSR methods: RCAN \cite{zhang2018image},
VESPCN \cite{caballero2017real}, SPMC \cite{tao2017detail}, TOFlow \cite{xue2019video}, FRVSR \cite{sajjadi2018frame}, DUF \cite{jo2018deep}, RBPN \cite{haris2019recurrent}, EDVR \cite{wang2019edvr}, BasicVSR \cite{chan2020basicvsr} and IconVSR \cite{chan2020basicvsr}.
Our experiments are implemented based on BasicSR \cite{wang2020basicsr}, with 8 NVIDIA TITAN RTX GPUs. 
We use Bicubic down-sampling to get LR images from HR images.
The channel size in each residual block is set to 64.
We set the number of Transformer blocks to be the number of frames.
We use Adam optimizer \cite{kingma2014adam} with $\beta_1{=}0.9, \beta_2{=}0.99$, and use Cosine Annealing \cite{loshchilov2016sgdr} to decay the learning rate from $2{\times}10^{-4}$ to $10^{-7}$.
More details of data augmentation, experiment settings, and the network architectures can be referred to in the supplementary materials.

\subsection{Resuts on REDS}\label{sec:reds}
We compare our proposed method with the state-of-the-art VSR methods on REDS.
For fair comparisons, we train BasicVSR and IconVSR \cite{chan2020basicvsr} with 5 frames to produce high resolution videos.
With the same amount of information, it is helpful to compare the performance of all VSR methods.

\textbf{Quantitative results.}
From Table \ref{tab:reds}, our method has the highest PSNR and comparable SSIM values, which verifies the superiority of our method.
When training with 5 frames, BasicVSR and IconVSR degrade severely and they are worse than EDVR.
It implies that their success largely derived from the aggregation of long-term sequence information.
More importantly, our model with 64 channels achieves better performance than EDVR-L with 128 channels.
On the other hand, although our model size is larger than other methods, it gains large improvement for VSR, especially for a small number of frames.
In practice, our model size can be smaller than directly using most existing Transformers in VSR due to many linear layers.
We leave this limitation on the model size in future work.

\begin{table*}[!t]
	\caption{Quantitative comparison (PSNR/SSIM) on \textbf{Vimeo-90K-T} for $4\times$ VSR. {Red} and {blue} indicate the best and the second best performance, respectively. }
	\label{tab:sr_vimeo90k}
	\begin{center}
		\tabcolsep=0.06cm
		\scalebox{0.7}{
		\begin{tabular}{c||c|c||c|c|c|c|c|c|c}
			\hline
			Average & Bicubic & RCAN~\cite{zhang2018image} &
			TOFlow~\cite{xue2019video} & DUF~\cite{jo2018deep} & RBPN~\cite{haris2019recurrent} & EDVR-L~\cite{wang2019edvr} & BasicVSR\cite{chan2020basicvsr} &
			IconVSR~\cite{chan2020basicvsr} &
			\textbf{Ours} 
			\\
			(Channel) & (1 Frame) & (1 Frame) & (7 Frames) & (7 Frames) & (7 Frames) & (7 Frames) & (7 Frames) & (7 Frames) & (7 Frames) \\ \hline
			RGB & 29.79/0.8483 & 33.61/0.9101 & 33.08/0.9054 & {34.33}/{0.9227} &-/-& \blue{\bf{35.79}}/\blue{\bf{0.9374}} & 35.31/0.9322 & 35.54/0.9347   & \red{\bf{35.88}}/\red{\bf{0.9380}} \\
			Y   & 31.32/0.8684 & 35.35/0.9251 & 34.83/0.9220 & 36.37/0.9387 & {37.07}/{0.9435} & \blue{\bf{37.61}}/\blue{\bf{0.9489}} & 37.18/0.9450 & 37.47/0.9476 & \red{\bf{37.71}}/\red{\bf{0.9494}} \\ \hline
		\end{tabular}}
	\end{center}
\end{table*}

\begin{table*}[!t]
	\caption{Quantitative comparison (PSNR/SSIM) on \textbf{Vid4} for $4\times$ VSR. {Red} and {blue} indicate the best and the second best performance, respectively. Y denotes the evaluation on Y channels. }
	\label{tab:sr_vid4}
	\begin{center}
		\tabcolsep=0.06cm
		\scalebox{0.8}{
		\begin{tabular}{l|c||c|c|c|c||c}
			\hline
			{Methods} & {Params (M)} & {Calendar (Y)} & {City (Y)} & {Foliage (Y)} & {Walk (Y)} & {Average (Y)} 
			\\
			\hline
			Bicubic                          & - & 20.39/0.5720 & 25.16/0.6028 & 23.47/0.5666 & 26.10/0.7974 & 23.78/0.6347  \\
			RCAN~\cite{zhang2018image}       & - & 22.33/0.7254 & 26.10/0.6960 & 24.74/0.6647 & 28.65/0.8719 & 25.46/0.7395 \\
			\hline
			VESPCN~\cite{caballero2017real}  & - & -/-   &   -/-    &   -/-    &   -/-   & 25.35/0.7557   \\
			SPMC~\cite{tao2017detail}        & - & 22.16/0.7465 & 27.00/0.7573 & 25.43/0.7208 & 28.91/0.8761 & 25.88/0.7752 \\
			TOFlow~\cite{xue2019video}       & - & 22.47/0.7318 & 26.78/0.7403 & 25.27/0.7092 & 29.05/0.8790 & 25.89/0.7651 \\
			FRVSR~\cite{sajjadi2018frame}    & 5.1 &   -/-   &   -/-    &   -/-    &   -/-   & 26.69/0.822   \\
			RBPN~\cite{haris2019recurrent}   & 12.2 & 23.99/0.807  & 27.73/0.803  & 26.22/0.757  & 30.70/0.909  & 27.12/0.818   \\
			EDVR-L~\cite{wang2019edvr}       & 20.6 & \blue{\bf{24.05}}/\red{\bf{0.8147}} & \red{\bf{28.00}}/\red{\bf{0.8122}} & \red{\bf{26.34}}/\blue{\bf{0.7635}} & \blue{\bf{31.02}}/\blue{\bf{0.9152}} & {27.35}/\blue{\bf{0.8264}} \\
			BasicVSR~\cite{chan2020basicvsr}& 6.3 &   -/-  &  -/-    &   -/-    &  -/-  & 27.24/0.8251  \\
			IconVSR~\cite{chan2020basicvsr}& 8.7 &   -/-  &  -/-    &   -/-   &  -/-  & \red{\bf{27.39}}/\red{\bf{0.8279}}  \\
			\hline
			\bf{VSR-Transformer (Ours)} & 43.8 &   \red{\bf{24.08}}/\blue{\bf{0.8125}}   &   \blue{\bf{27.94}}/\blue{\bf{0.8107}}    &   \blue{\bf{26.33}}/\red{\bf{0.7635}}    &   \red{\bf{31.10}}/\red{\bf{0.9163}}   &   \blue{\bf{27.36}}/0.8258    \\
			\hline
		\end{tabular}
		}
	\end{center}
\end{table*}

\begin{figure}[t]
\centering
{
	\includegraphics[width=1\linewidth]{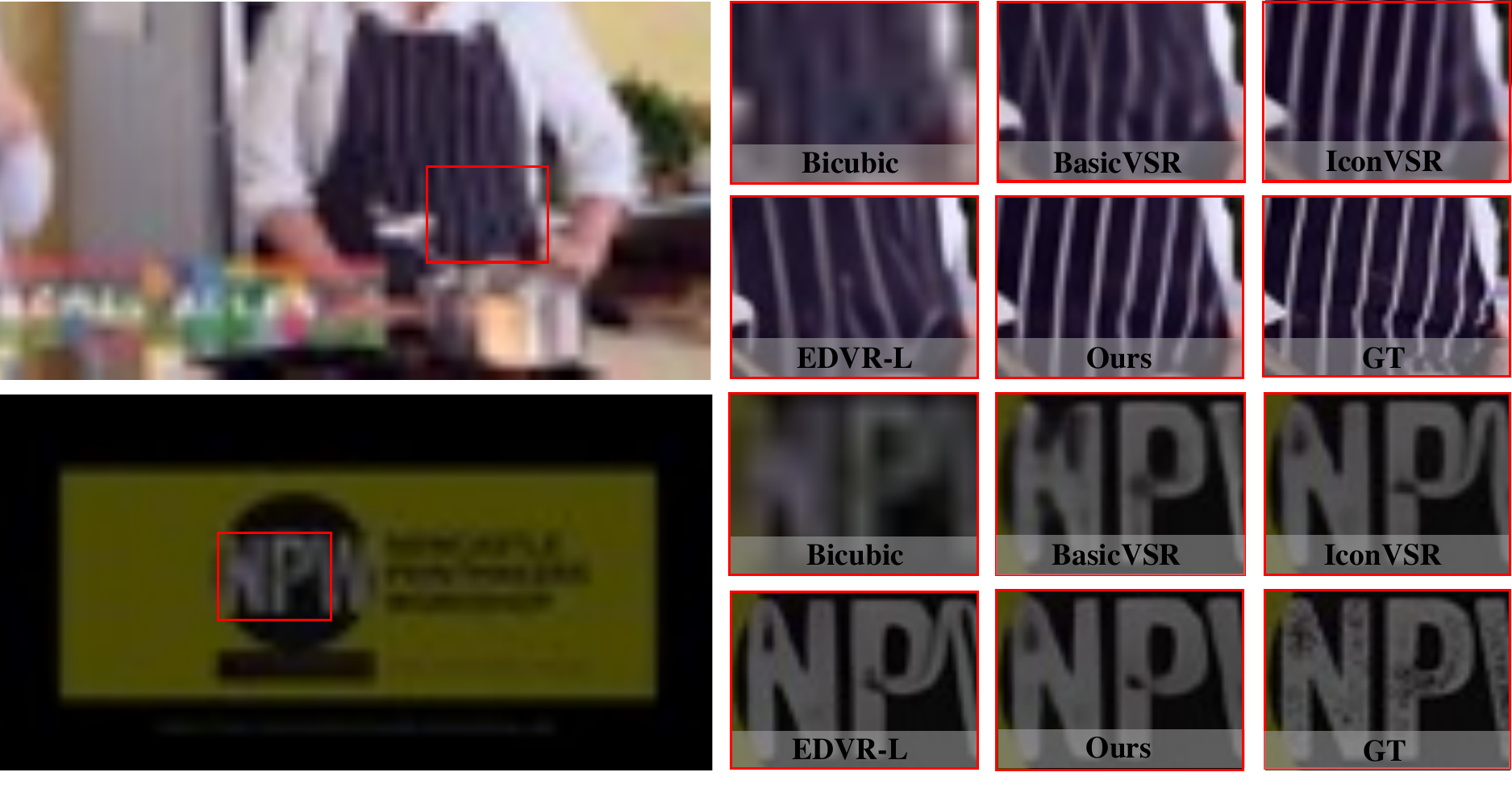}
	\caption{Qualitative comparison on Vimeo-90K-T for 4× VSR. Zoom in for the best view.
	}
	\label{fig: vimeo}
}
\end{figure}

\begin{figure}[t]
\centering
{
	\includegraphics[width=1\linewidth]{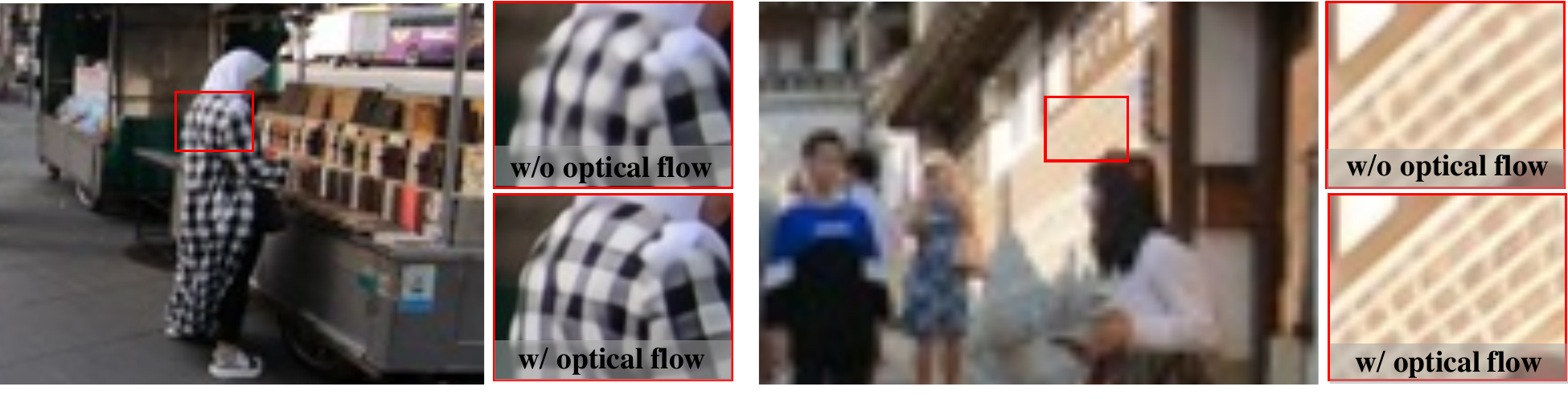}
	\caption{Ablation study on REDS for $4\times$ VSR. Here, w/o and w/ optical flow mean the VSR-Transformer without and with the optical flow, respectively. Zoom in for the best view.
	}
	\label{fig: ablation}
}
\end{figure}

\textbf{Qualitative results.}
From Figure \ref{fig: reds1}, our VSR-Transformer is able to recover finer details and sharper edges, including square patterns, the horizontal and vertical strip patterns.
In contrast, when training on 5 frames of REDS, BasicVSR and IconVSR are worse than EDVR and fail to generate sharp images.
Therefore, these results also verify the superiority of our method on the VSR task.
More qualitative results on REDS are shown in the supplementary materials.

\subsection{Resuts on Vimeo-90K}
In this experiment, we evaluate the performance of the VSR-Transformer on Vimeo-90K-T.
We train our all models on Vimeo-90K and then evaluate them on Vimeo-90K-T and Vid4.

\textbf{Quantitative results.}
From Table \ref{tab:sr_vimeo90k}, the VSR-Transformer achieves the highest PSNR and SSIM, and thus outperforms other VSR methods although the model size is larger than other methods. 
Here, the model size in Table \ref{tab:sr_vid4} is different from Table \ref{tab:reds} because the model size equals to the number of frames. 
In contrast, BasicVSR is much worse than EDVR and our method since the number of frames is small. 
When testing on Vid4, the generalization ability of our model is better than EDVR and is worse than BasicVSR and IconVSR.
The possible reason is that BasicVSR and IconVSR are tested on all frames of the Vid4, while the VSR-Transformer and EDVR are tested on 7 frames.
Moreover, there may exist a distribution bias between Vimeo-90K-T and Vid4.

\textbf{Qualitative results.}
As shown in Figure \ref{fig: vimeo}, the VSR-Transformer is able to generate sharp and realistic HR frames.
In contrast, BasicVSR and IconVSR often produce blurry HR images because of a small number of frames.
In addition, the texture generated by EDVR is blurry and messy. 
More qualitative results on the Vimeo-90K-t and Vid4 datasets are shown in the supplementary materials.

\subsection{Ablation Study}
We investigate the effectiveness of optical flows in our VSR-Transformer on REDS.
By removing SPyNet in the bidirectional optical flow-based feed-forward layer, we directly use a stack of Residual ReLU networks in the experiment. 
For the quantitative comparison, the VSR-Transformer without optical flows has the PSNR of 30.37, which has worse performance than that with optical flows.  
From Figure \ref{fig: ablation}, the VSR-Transformer with optical flow is able to generate HR frames with finer details and sharper edges.
It means that the optical flow is important in the bidirectional optical flow-based feed-forward layer and it helps to perform feature propagation and alignments.
More ablation studies can be found in the supplementary materials.

\section{Conclusion}
In this paper, we have proposed a novel Transformer framework for video super-resolution, namely VSR-Transformer.
Instead of directly applying existing vision Transformer for VSR, we present a spatial-temporal convolutional self-attention layer to leverage locality information.
Moreover, we provide a theoretical analysis to verify that the spatial-temporal convolutional self-attention layer has an advantage over the fully connected self-attention layer. 
Then, we develop a bidirectional optical flow-based feed-forward layer to exploit the correlations among different frames.
With the help of this layer, we are able to perform both feature propagation and alignment. 
Extensive experiments on several benchmark datasets demonstrate the effectiveness of our proposed method.


\bibliographystyle{abbrv} 



\newpage
\appendix

\def\blfootnote{\let\thefootnote\relax\footnotetext}

\begin{table}
	\setlength{\tabcolsep}{0.2cm}
	\begin{tabular}{p{0.97\columnwidth}}
		\nipstophline
		\rule{0pt}{1.0cm}
		\centering
		\!\!\!\!\Large{\textbf{Supplementary Materials: Video Super-Resolution Transformer\!\!\!\!\!}}
		\vspace{2pt}
	\end{tabular}
\end{table}
\begin{table}
	\begin{tabular}{c}
		\nipsbottomhline
		~~~~~~~~~~~~~~~~~~~~~~~~~~~~~~~~~~~~~~~~~~~~~~~~~~~~~~~~~~~~~~~~~~~~~~~~~~~~~~~~~~~~~~~~~~~~~~~~~~~~~~~~~~~~~~~~~~~~~~~~~~~~~~~~~~~~~~~~~~~~~~~~~~~~~~~~~~
	\end{tabular}
\end{table}


\setcounter{section}{0}
\renewcommand\thesection{\Alph{section}}
~\\
\paragraph{Organization.}
In the supplementary materials, we provide detailed proofs for all theorems and lemmas of our paper, and more experiment settings and results.
We organize our supplementary materials as follows.
For the theory part, we provide preliminaries of the proposed method in Section \ref{suppsec:pre}. In Sections \ref{suppsec:thm1} and \ref{suppsec:thm2}, we provide detailed proofs of Theorem \ref{thm:fcsa} and Theorem \ref{thm:stcsa}, respectively.
For Experiment part, we provide more experiment details and network architectures in Section \ref{suppsec:exp_detail}. 
In Section \ref{suppsec:ablation}, we provide more ablation studies for our proposed method.
In Section \ref{suppsec:qualitative}, we provide more qualitative results on several benchmark datasets.
In Section \ref{suppsec:vid4}, we provide more quantitative results on the Vid4 dataset.

\section{Preliminaries} \label{suppsec:pre} 
\paragraph{Notation.}
Throughout the paper, we use the following notations. 
We use a calligraphic letter $\mX$ to denote a sequence data, a bold upper case letters $\bX$ to denote a matrix, a bold lower case letter $\bx$ to denote a vector, a lower case letter $x$ to denote a element of a matrix. 
Let $\sigma_1(\cdot)$ be the softmax operator applied to each column of the matrix, \ie the matrix has non-negative elements with each column summing to 1.
Let $\sigma_2(\cdot)$ be a ReLU activation function, 
and let $\phi(\cdot)$ be a layer normalization function.
Let $[T]$ be a set $\{1, \ldots, T\}$.
Let $\1\{\cdot\}$ be an indicator function, where $ \1{\{A\}}=1 $ if $A$ is true and $ \1{\{A\}}=0 $ if $A$ is false.
Let $\mmE_{\mD}[\cdot]$ be an expectation with respect to the distribution $\mD$.

To develop our method, we first give some definitions of the function distance and $k$-pattern function.
\begin{deftn}\textbf{\emph{(Function distance)}}
    Given a functions $f: \mmR^{d{\times}n} \to \mmR^{d{\times}n}$ and a target function $f^*: \mmR^{d{\times}n} \to \mmR^{d{\times}n}$, we define a distance between these two function as:
    \begin{align}
        \mL_{f^*, \mD}(f):= \mmE_{\bX \sim \mD} \left[ \ell(f(\bX), f^*(\bX)) \right].
    \end{align}
\end{deftn}
For a ground-truth $\bY = f^*(\bX)$, we denote the loss as $\mL_{\mD}(f)$.
In the proofs of the theorem, we use a hing-loss $\ell(\hat{y}, y)=\max \{1-\hat{y} y, 0\}$ as an example.
To capture the locality of data, we define the $k$-pattern function as follows.

\begin{deftn}\textbf{\emph{($k$-pattern function \cite{ref_malach2021computational})}}
    A function $f{:} \mX {\to} \mY$ is a $k$-pattern if for some $g{:} \{{\pm1}\}^{k} {\to} \mY$ and index $j^*${:} $f(\bx) = g(x_{j^* \ldots j^*{+}k})$.
    We call a function $h_{\bu, \bW}(\bx) = \sum_{j} \langle \bu^{(j)}, \bv^{(j)}_{\bW} \rangle$ can learn a $k$-pattern function from a feature $\bv^{(j)}_{\bW}$ of data $\bx$ with a layer $\bu^{(j)}\in\mmR^{q}$ if for $\epsilon>0$, we have 
    \begin{align}
        \mL_{f, \mD}(h_{\bu, \bW}) \leq \epsilon.
    \end{align}
\end{deftn}
Note that the feature $\bv^{(j)}_{\bW}$ can be learned by a convolutional network or a fully-connected network followed by a ReLU activation function $\sigma_2(\cdot)$.
With the same linear layer $\bu$, if the convolutional network or the fully-connected network can learn the local pattern of data if $\mL_{f, \mD}(h_{\bu, \bW}) \leq \epsilon$.

We use the following theorem about the convergence of online gradient-descent.
This theorem verifies that the gradient-descent converges to a good solution.
\begin{thm} \textbf{\emph{(Online Gradient Descent \cite{ref_daniely2020learning})}} \label{thm:online_GD}
    Fix some $\eta$, and let $f_1, \dots, f_S$ be some sequence of convex functions.
    Fix some $\theta_1$, and update $\theta_{s+1} {=} \theta_{s} {-} \eta \nabla f_s (\theta_s)$.
    For every $\theta^*$ the following holds:
    \begin{align}
        \frac{1}{S} \sum_{s=1}^S f_s(\theta_s) {\leq} \frac{1}{S} \sum_{s=1}^S f_s(\theta^*) {+} \frac{1}{2 \eta S} \| \theta^* \|^2 {+} \| \theta_1 \| \frac{1}{S} \sum_{s=1}^S \| \nabla f_s (\theta_s) \| {+} \eta \frac{1}{S} \sum_{s=1}^S \| \nabla f_s (\theta_s) \|^2.
    \end{align}
\end{thm}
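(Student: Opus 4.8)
The plan is to run the classical online-gradient-descent potential argument, adapted to account for an arbitrary (non-zero) starting point $\theta_1$. First I would use convexity of each $f_s$ to linearize the regret: since $f_s$ is convex, the (sub)gradient inequality gives $f_s(\theta_s) - f_s(\theta^*) \leq \langle \nabla f_s(\theta_s), \theta_s - \theta^* \rangle$, so it suffices to bound $\sum_{s=1}^S \langle \nabla f_s(\theta_s), \theta_s - \theta^*\rangle$. Writing $v_s := \nabla f_s(\theta_s)$, the whole argument then reduces to controlling the linear regret $\sum_s \langle v_s, \theta_s - \theta^*\rangle$ of the iterates $\theta_{s+1} = \theta_s - \eta v_s$ against the fixed comparator $\theta^*$.

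The key step is to separate out the influence of the initialization. I would introduce the shifted iterates $\hat\theta_s := \theta_s - \theta_1$, which satisfy $\hat\theta_1 = \0$ and the same recursion $\hat\theta_{s+1} = \hat\theta_s - \eta v_s$, and then split
\begin{align}
    \langle v_s, \theta_s - \theta^* \rangle = \langle v_s, \hat\theta_s - \theta^* \rangle + \langle v_s, \theta_1 \rangle .
\end{align}
The first piece is exactly the regret of a zero-initialized online gradient descent, which I would bound by the standard telescoping of the squared distance to the comparator: expanding $\|\hat\theta_{s+1} - \theta^*\|^2 = \|\hat\theta_s - \theta^*\|^2 - 2\eta \langle v_s, \hat\theta_s - \theta^*\rangle + \eta^2 \|v_s\|^2$, solving for the inner product, and summing over $s$ gives a telescoping sum whose boundary terms leave only $\|\hat\theta_1 - \theta^*\|^2 = \|\theta^*\|^2$ after discarding the nonnegative terminal term $\|\hat\theta_{S+1}-\theta^*\|^2$. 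This yields $\sum_s \langle v_s, \hat\theta_s - \theta^*\rangle \leq \tfrac{1}{2\eta}\|\theta^*\|^2 + \tfrac{\eta}{2}\sum_s \|v_s\|^2$, which is precisely where the $\tfrac{1}{2\eta S}\|\theta^*\|^2$ term in the statement originates.

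For the second piece I would simply apply Cauchy--Schwarz, $\langle v_s, \theta_1\rangle \leq \|\theta_1\|\,\|v_s\|$, so that $\sum_s \langle v_s, \theta_1\rangle \leq \|\theta_1\| \sum_s \|v_s\|$, producing the $\|\theta_1\|\tfrac1S\sum_s\|\nabla f_s(\theta_s)\|$ term. Combining both pieces, dividing by $S$, and relaxing the gradient-squared coefficient from $\tfrac{\eta}{2}$ up to $\eta$ gives exactly the claimed inequality.

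I do not expect a genuine obstacle here: this is the standard OGD regret bound and every step is elementary. The only point requiring care is the bookkeeping around the arbitrary initialization --- the shift $\hat\theta_s = \theta_s - \theta_1$ is what makes the clean $\|\theta^*\|^2$ (rather than $\|\theta_1-\theta^*\|^2$) appear together with a separate linear-in-$\|\theta_1\|$ term, instead of the cross term $\langle\theta_1,\theta^*\rangle$ one would get from naively expanding $\|\theta_1-\theta^*\|^2$. I would also note the mild caveat that convexity is invoked through the (sub)gradient inequality, so $\nabla f_s$ should be read as a subgradient whenever $f_s$ fails to be differentiable.
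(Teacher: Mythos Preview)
Your argument is correct and is the standard online-gradient-descent potential/telescoping proof, with the nice observation that shifting by $\theta_1$ cleanly separates the $\|\theta^*\|^2$ term from the $\|\theta_1\|$-weighted gradient term. Note, however, that the paper does not actually prove this theorem: it is quoted as a preliminary result from \cite{ref_daniely2020learning} and used as a black box in the proof of Theorem~\ref{thm:stcsa}, so there is no paper-side proof to compare against. Your derivation stands on its own as a complete and valid proof of the cited statement.
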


\newpage
\section{Proofs of Theorem \ref{thm:fcsa}} \label{suppsec:thm1}
\textbf{Theorem \ref{thm:fcsa}} 
\emph{
    Let $n$ be the size of image and $q$ be the size of $u$. We assume $m=1$ and $|u_i|\leq 1$. and the weights are initialized as some permutation invariant distribution over $\mmR^n$, and for all $\bx$ we have $h_{\bu, \bW}^{\emph{FCSA}}(\bx) \in [-1, 1]$ which satisfies Definition \ref{def:k_pattern}. Then, the following holds:
    \begin{align}
        \mmE_{\bW \sim \mW} \left\| \frac{\partial}{\partial \bW} \mL_{f, \mD}\left(h_{\bu, \bW}^{\emph{FCSA}}\right) \right\|_2^2 
        &\leq q n \min\left\{\begin{pmatrix} n{-}1 \\ k \end{pmatrix}^{{-}1}, \begin{pmatrix} n{-}1 \\ k{-}1 \end{pmatrix}^{{-}1}  \right\}.
    \end{align}
}
\begin{proof}
    Follows by the proofs of \cite{ref_malach2021computational}, we complete the following proofs.
    Denote $\chi_{\mI'} = \Pi_{i \in \mI'} x_i$, so $f(\bx) = \chi_{\mI}$ with $\mI = [k]$.
    By calculating the gradient to $\bw_j^{(i)}$:
    \begin{align*}
        \frac{\partial}{\partial \bw_j^{(i)}} \mL_{f, \mD} \left(h_{\bu, \bW}^{\emph{FCSA}}\right) 
        =& \mmE_{\bx \sim \mD} \left[ \frac{\partial}{\partial \bw_j^{(i)}} \ell\left( h_{\bu, \bW}^{\emph{FCSA}}, f(\bx) \right) \right] \\
        =& -\mmE_{\bx \sim \mD} \left[ x_j u_i \sigma_2' \left( \left[\bW_o (\bW_v \bx) \sigma_1\left((\bW_k \bx)^{\top}(\bW_q \bx) \right]_i \right) \right)\chi_{\mI}(\bx) \right] \\
        =& -\mmE_{\bx \sim \mD} \left[ x_j u_i \sigma_2' \left( \left\langle \bw^{(i)}, \bx \right\rangle \right) \chi_{\mI}(\bx) \right],
    \end{align*}
    where the last equation follows by the assumption $m{=}1$ and the loss function $\ell(\cdot, \cdot)$, and there exists $\bw^{(i)}$ such that the second line is satisfied.
    Fix some permutation $\pi: [n]{\to}[n]$. 
    Let $\pi(\bx) {=} (x_{\pi(1)}, \ldots, x_{\pi(n)})$, for $\mI {\subseteq} [n]$, we let $\pi(\mI) = \cup_{j \in \mI} \{ \pi(j) \}$.
    Notice that for all $\bx, \bz: \chi_{\mI}(\pi(\bx)) {=} \chi_{\pi(\mI)}$ and $\langle \pi(\bx), \bz \rangle {=} \langle \bx, \pi^{-1} (\bz) \rangle$.
    Denote $\pi(h_{\bu, \bW}^{\emph{FCSA}})(\bx) = \sum_{i=1}^k u_i \sigma_2( \langle \pi(\bw^{(i)}), \bx\rangle)$, and denote $\pi(\mD)$ the distribution of $\pi(\bx)$ where $\bx {\sim} \mD$.
    Notice that since $\mD$ is the uniform distribution, we have $\pi(\mD) = \mD$.
    Therefore, for every permutation $\pi$ with $\pi(j)=j$ we have:
    \begin{align*}
        - \frac{\partial}{\partial \bw_j^{(i)}} \mL_{\chi_{\pi(\mI)}, \mD} \left(h_{\bu, \bW}^{\emph{FCSA}}\right) 
        =& \mmE_{\bx \sim \mD} \left[ x_j u_i \sigma_2' \left( \left\langle \bw^{(i)}, \bx \right\rangle \right) \chi_{\pi(\mI)}(\bx) \right] \\
        =& \mmE_{\bx \sim \pi(\mD)} \left[ x_j u_i \sigma_2' \left( \left\langle \bw^{(i)}, \pi^{-1}(\bx) \right\rangle \right) \chi_{\mI}(\bx) \right] \\
        =& \mmE_{\bx \sim \mD} \left[ x_j u_i \sigma_2' \left( \left\langle \pi\left(\bw^{(i)}\right), \bx \right\rangle \right) \chi_{\mI}(\bx) \right] 
        = - \frac{\partial}{\partial \bw_j^{(i)}} \mL_{\chi_{\mI}, \mD} \left(\pi( h_{\bu, \bW}^{\emph{FCSA}})\right).
    \end{align*}
    Fix some $\mI \subseteq [n]$ with $|\mI| = k$ and $j \in [n]$.
    Now, let $\mP_j$ be a set of permutations satisfying: 
    (i) For all $\pi_1, \pi_2 \in \mP_j$ with $\pi_1 \neq \pi_2$ we have $\pi_1(\mI) \neq \pi_2 (\mI)$;
    (ii) For all $\pi \in \mP_j$ we have $\pi(j)=j$.
    Note that if $j \notin \mI$ then the maximal size of such $\mP_j$ is 
    $ \binom{n-1}{k}$, and if $j \in \mI$ then the maximal size is $ \binom{n-1}{k-1}$.
    Denote $g_j(\bx) = x_j u_i \sigma'_2(\langle \bw^{(i)}, \bx \rangle)$.
    We denote the inner-product $\langle \psi, \phi \rangle_{\mD} = \mmE_{\bx \sim \mD} [\psi(\bx) \phi(\bx)]$ and the induced norm $\| \psi \|_{\mD} = \sqrt{\langle \psi, \psi \rangle_{\mD}}$.
    Since $\{\chi_{\mI'}\}_{\mI' \subseteq [n]}$ is an orthonormal basis with respect to $\langle \cdot, \cdot \rangle_{\mD}$ from Parseval's equality, we have
    \begin{align*}
        \sum_{\pi \in \mP_j} \left( \frac{\partial}{\partial \bw_j^{(i)}} \mL_{\chi_{\mI}, \mD}\left( \pi \left( h_{\bu, \bW}^{\emph{FCSA}} \right) \right) \right)^2 
        =& \sum_{\pi \in \mP} \left( \frac{\partial}{\partial \bw_j^{(i)}} \mL_{\chi_{\pi(\mI)}, \mD}\left( h_{\bu, \bW}^{\emph{FCSA}} \right) \right)^2 \\
        =& \sum_{\pi \in \mP} \langle g_j, \chi_{\pi(\mI)} \rangle_{\mD}^2 \leq \sum_{\mI' \subseteq [n]} \langle g_j, \chi_{\mI'} \rangle_{\mD}^2 = \| g_j \|_{\mD}^2 \leq 1.
    \end{align*}
    So, from the above we get that, taking $\mP_j$ of maximal size:
    \begin{align*}
        \mmE_{\pi \sim \mP_j} \left( \frac{\partial}{\partial \bw_j^{(i)}} \mL_{\chi_{\mI}, \mD}\left( \pi \left( h_{\bu, \bW}^{\emph{FCSA}} \right) \right) \right)^2 \leq | \mP_j |^{-1} \leq \min\left\{\begin{pmatrix} n{-}1 \\ k \end{pmatrix}^{{-}1}, \begin{pmatrix} n{-}1 \\ k{-}1 \end{pmatrix}^{{-}1}  \right\}.
    \end{align*}
    Now, for some permutation invariant distribution of weights we have:
    \begin{align*}
        \mmE_{\bW} \left( \frac{\partial}{\partial \bw_j^{(i)}} \mL_{\chi_{\mI}, \mD}\left( h_{\bu, \bW}^{\emph{FCSA}} \right) \right)^2 
        = \mmE_{\bW} \mmE_{\pi \sim \mP_j} \left( \frac{\partial}{\partial \bw_j^{(i)}} \mL_{\chi_{\mI}, \mD}\left( \pi \left( h_{\bu, \bW}^{\emph{FCSA}} \right) \right) \right)^2 \leq | \mP_j |^{-1}.
    \end{align*}
    Summing over all neurons we get:
    \begin{align*}
        \mmE_{\bW} \left\| \frac{\partial}{\partial \bW} \mL_{\chi_{\mI}, \mD}\left( h_{\bu, \bW}^{\emph{FCSA}} \right) \right\|^2_2 \leq qn \min\left\{\begin{pmatrix} n{-}1 \\ k \end{pmatrix}^{{-}1}, \begin{pmatrix} n{-}1 \\ k{-}1 \end{pmatrix}^{{-}1}  \right\}.
    \end{align*}
\end{proof}

\newpage
\section{Proofs of Theorem \ref{thm:stcsa}}\label{suppsec:thm2}
\textbf{Theorem \ref{thm:stcsa}} 
\emph{
    Assume we initialize each element of weights uniformly drawn from $\{\pm 1/k\}$.
    Fix some $\delta >0$, some $k$-pattern $f$ and some distribution $\mD$. Then is $q>2^{k+3} \log (2^k/\delta)$, and let $h_{\bu^{(s)}, \mW^{(s)}}^{\emph{STCSA}}$ be a function satisfying Definition \ref{def:k_pattern},
    with probability at least $1-\delta$ over the initialization, when training a spatial-temporal convolutional self-attention (STCSA) layer using gradient descent with $\eta$, we have
    \begin{align}
        \frac{1}{S} \sum_{s=1}^{S} \mL_{f, \mD} \left(h_{\bu^{(s)}, \mW^{(s)}}^{\emph{STCSA}} \right) \leq \eta^2 S^2 nk^{5/2} 2^{k+1} + \frac{k^2 2^{2k+1}}{q \eta S} +  \eta nqk.
    \end{align}
}
\begin{proof}
    From Lemma \ref{lemma:separable}, with probability at least $1-\delta$ over the initialization, there exist $\bu^{*(1)}, \ldots, \bu^{*(n-k)}$ with $\| \bu^{*(1)}\| \leq 2^{k+1}k/\sqrt{q}$ and $\|\bu^{*(j)} \|=0$ for $j>1$ such that $h_{\bu^{*}, \mW^{(0)}}^{\emph{STCSA}}(\bx)=f(\bx)$, and so $\mL_{f, \mD} (h_{\bu^{(s)}, \mW^{(s)}}^{\emph{STCSA}} ) = 0$.
    Based on Theorem \ref{thm:online_GD}, since $\mL_{f, \mD} (h_{\bu, \mW}^{\emph{STCSA}})$ is convex with respect to $\bu$, we have:
    \begin{align*}
        \!\!\frac{1}{S} \!\!\sum_{s=1}^S \mL_{f, \mD}\!\left(h_{\bu^{(s)}, \mW^{(s)}}^{\text{STCSA}}\!\!\right)
        \leq& \frac{1}{S} \!\!\sum_{s=1}^S \mL_{f, \mD}\!\left(h_{\bu^*, \mW^{(s)}}^{\text{STCSA}}\!\right) \!\!{+} \frac{1}{2\eta S} \!\!\sum_{j=1}^{n-k} \left\| \bu^{*(j)} \right\|^2 \!\!\!\!{+} \eta \frac{1}{S} \!\!\sum_{s=1}^S \left\| \frac{\partial}{\partial \bu} \mL_{f, \mD} \left(h_{\bu^{(s)}, \mW^{(s)}}^{\text{STCSA}}\!\!\right) \right\|^2 \\
        \leq& \frac{1}{S} \sum_{s=1}^S \mL_{f, \mD}\left(h_{\bu^*, \mW^{(s)}}^{\text{STCSA}}\right) + \frac{2(2^k k)^2}{q \eta S} + \eta nqk \\
        \leq& \frac{1}{S} \sum_{s=1}^S \mL_{f, \mD}(h_{\bu^*, \mW^{(0)}}^{\text{STCSA}}) + \eta^2 S^2 n k^{3/2} \sqrt{q} \sum_{j=1}^{n-k} \left\| \bu^{*(j)} \right\| + \frac{2(2^k k)^2}{q \eta S} + \eta nqk\\
        \leq& \eta^2 S^2 nk^{5/2} 2^{k+1} + \frac{2(2^k k)^2}{q \eta S} + \eta nqk,
    \end{align*}
    where the first line follows by Theorem \ref{thm:online_GD}, and second line holds by the property of $\bu^{*(j)}$, the third line hold by Lemma \ref{lemma:Ls_L0}, and the fourth line follows by the inequality $\| \bu^{*(1)}\| \leq 2^{k+1}k/\sqrt{q}$.
\end{proof}

\begin{lemma}\label{lemma:Ls_L0}
    Given the learning rate $\eta$ and steps $S$, for every $\bu^*$, we have:
    \begin{align*}
        \left| \mL_{f, \mD} \left( h_{\bu^{*}, \mW^{(S)}}^{\emph{\text{STCSA}}} \right) - \mL_{f, \mD} \left( h_{\bu^{*}, \mW^{(0)}}^{\emph{\text{STCSA}}} \right) \right| \leq \eta^2 S^2 n k^{3/2} \sqrt{q} \sum_{j=1}^{n-k} \left\| \bu^{*(j)} \right\|.
    \end{align*}
\end{lemma}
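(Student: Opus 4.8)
The plan is to reduce the whole statement to controlling how far the attention weights $\mW$ drift during the $S$ gradient steps, and then to exploit the fact that this drift is of order $\eta^2 S^2$ rather than the naive $\eta S$. First I would use that the hinge loss $\ell(\hat{y}, y)=\max\{1-\hat{y} y, 0\}$ is $1$-Lipschitz in its first argument, so
\begin{align*}
\left| \mL_{f,\mD}\!\left(h_{\bu^*, \mW^{(S)}}^{\text{STCSA}}\right) - \mL_{f,\mD}\!\left(h_{\bu^*, \mW^{(0)}}^{\text{STCSA}}\right) \right| \leq \mmE_{\bX \sim \mD}\left| h_{\bu^*, \mW^{(S)}}^{\text{STCSA}}(\bX) - h_{\bu^*, \mW^{(0)}}^{\text{STCSA}}(\bX) \right|.
\end{align*}
Writing $h_{\bu^*, \mW}^{\text{STCSA}}(\bX) = \sum_j \langle \bu^{*(j)}, \bv^{(j)}_{\mW}(\bX)\rangle$ and applying Cauchy--Schwarz term by term gives
\begin{align*}
\left| h_{\bu^*, \mW^{(S)}}^{\text{STCSA}}(\bX) - h_{\bu^*, \mW^{(0)}}^{\text{STCSA}}(\bX) \right| \leq \sum_{j=1}^{n-k} \|\bu^{*(j)}\| \cdot \left\| \bv^{(j)}_{\mW^{(S)}}(\bX) - \bv^{(j)}_{\mW^{(0)}}(\bX) \right\|,
\end{align*}
so it suffices to bound the per-patch feature drift $\|\bv^{(j)}_{\mW^{(S)}} - \bv^{(j)}_{\mW^{(0)}}\|$ uniformly in $\bX$.

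Second, since each feature $\bv^{(j)}_{\mW}$ is a composition of a convolution against $\mW$ with a $\sigma_2$ activation (together with the softmax aggregation), it is Lipschitz in $\mW$ with a constant governed by the patch norm; on $\{\pm1\}^{k}$ patches this contributes a factor of order $\sqrt{k}$. Hence $\|\bv^{(j)}_{\mW^{(S)}} - \bv^{(j)}_{\mW^{(0)}}\| \leq \sqrt{k}\,\|\mW^{(S)} - \mW^{(0)}\|$, and the problem collapses to estimating the length of the weight trajectory.

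Third --- and this is the heart of the argument --- I would show that $\|\mW^{(S)} - \mW^{(0)}\|$ is of order $\eta^2 S^2$, not merely $\eta S$. The key observation is that the linear layer $\bu$ is initialized at zero, so after $s$ steps its norm has grown only to $\|\bu^{(s)}\| = O(\eta s)$, its per-step gradient being controlled by the bounded feature norm (of order $\sqrt{q}$, up to $k$ and $n$ factors). Because the loss gradient with respect to $\mW$ factors through $\bu$ --- the output is linear in $\bu$, so $\partial h/\partial \mW \propto \bu$ --- we obtain $\|\nabla_{\mW}\mL^{(s)}\| = O(\eta s\, nk\sqrt{q})$ after accounting for the feature Jacobian (a factor $\sqrt{k}$) and a Cauchy--Schwarz sum over the $n-k$ patches. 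Summing the gradient-descent increments, $\|\mW^{(S)} - \mW^{(0)}\| \leq \eta \sum_{s=0}^{S-1}\|\nabla_{\mW}\mL^{(s)}\| = O(\eta^2 S^2 nk\sqrt{q})$, where the arithmetic series $\sum_s s \sim S^2/2$ is precisely what upgrades one power of $S$ to two.

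Finally, chaining the three bounds and collecting constants --- the $\sqrt{k}$ from feature Lipschitzness multiplied by the $nk\sqrt{q}$ governing the trajectory length, giving $nk^{3/2}\sqrt{q}$ --- yields the claimed inequality with the factor $\sum_{j=1}^{n-k}\|\bu^{*(j)}\|$ carried through from the Cauchy--Schwarz step. The main obstacle I anticipate is the third step: one must verify carefully that $\nabla_{\mW}\mL$ genuinely vanishes to first order at initialization (i.e.\ scales with $\|\bu^{(s)}\|$) in the convolutional-attention setting, since the softmax together with the unfold/fold operations make the feature map more intricate than the plain convolution-plus-activation of the reference construction; extracting the correct powers of $k$ and $q$ from the feature map's Lipschitz constant and from the bound on $\|\bu^{(s)}\|$ is where the bookkeeping is delicate.
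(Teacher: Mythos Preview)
Your proposal is correct and follows essentially the same route as the paper: Lipschitzness of the hinge loss, then Cauchy--Schwarz against $\sum_j\|\bu^{*(j)}\|$, then the $\sqrt{k}$ Lipschitz constant of the feature map in $\mW$, and finally the trajectory bound $\|\mW^{(S)}-\mW^{(0)}\|\le\eta^2S^2nk\sqrt{q}$ obtained by exploiting $\bu^{(0)}=0$ so that $\nabla_{\mW}\mL$ factors through $\|\bu^{(s)}\|=O(\eta s\sqrt{qk})$. One reassurance regarding your stated obstacle: the paper works throughout under the standing assumption $m=1$, which collapses the softmax factor to $1$ and reduces the STCSA feature to a plain convolution-plus-activation $\sigma_2(\bW\bx_{j\ldots j+k})$, so the bookkeeping you worry about with unfold/fold and softmax never actually arises.
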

\begin{proof}
    Based on the result of $\| \bW^{(S)} {-} \bW^{(0)} \|$ in Lemma \ref{lemma:WS_W0} and the assumption of $\sigma_2$, we have
    \begin{align*}
        &\left| \mL_{f, \mD} \left( h_{\bu^{*}, \mW^{(S)}}^{\text{STCSA}} \right) - \mL_{f, \mD} \left( h_{\bu^{*}, \mW^{(0)}}^{\text{STCSA}} \right) \right| \\
        =& \left| \mmE_{\bx \sim \mD} \left[ \ell(h_{\bu^{*}, \mW^{(S)}}^{\text{STCSA}}(\bx), f(\bx)) \right] - \mmE_{\bx \sim \mD} \left[ \ell(h_{\bu^{*}, \mW^{(0)}}^{\text{STCSA}}(\bx), f(\bx)) \right] \right| \\
        \leq& \mmE_{\bx \sim \mD} \left[ \left| h_{\bu^{*}, \mW^{(S)}}^{\text{STCSA}}(\bx) - h_{\bu^{*}, \mW^{(0)}}^{\text{STCSA}}(\bx) \right| \right] \\
        =& \mmE_{\bx \sim \mD} \left[ \left| \sum_{j=1}^{n-k} \left\langle \bu^{*(j)}, \sigma_2\left(\bW^{(S)}\bx_{j\ldots j+k}\right) - \sigma_2\left(\bW^{(0)}\bx_{j\ldots j+k}\right) \right\rangle \right| \right] \\
        \leq& \mmE_{\bx \sim \mD} \left[ \sum_{j=1}^{n-k} \left\| \bu^{*(j)} \right\| \left\| \bW^{(S)}-\bW^{(0)} \right\| \| \bx_{j\ldots j+k} \| \right] \\
        \leq& \eta^2 S^2 n k^{3/2} \sqrt{q} \sum_{j=1}^{n-k} \left\| \bu^{*(j)} \right\|,
    \end{align*}
    where the second and third lines follow by the definition of the loss function, the fourth line follows by the assumption of the activation, and the last line holds by Lemma \ref{lemma:WS_W0}.
\end{proof}

\newpage
\begin{lemma}\label{lemma:separable}
    Assume we initialize each element of weights uniformly drawn from $\{\pm 1/k\}$, and fix some $\delta>0$. 
    Then if $q>2^{k+3} \log(2^k / \delta)$ with probability at least $1-\delta$ over the choice of the weights, for every $k$-pattern $f$ there exist $\bu^{*(1)}, \ldots, \bu^{*(n-k)} \in \mmR^q$ with $\| \bu^{*(j^*)} \| \leq 2^{k+1}k/\sqrt{q}$ and $\| \bu^{*(j)} \| = 0$ for $j \neq j^*$ such that $h_{\bu^*, \mW}^{\emph{\text{STCSA}}} (\bx) = f(\bx)$.
\end{lemma}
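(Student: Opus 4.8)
The plan is to follow the convolutional representation argument of \cite{malach2021computational}. Because $f$ is a $k$-pattern, $f(\bx)=g(x_{j^*\ldots j^*+k})$ depends only on the window at the fixed position $j^*$; I would therefore set $\bu^{*(j)}=\0$ for all $j\neq j^*$ and construct $\bu^{*(j^*)}$ alone so that the $j^*$-th patch feature reproduces $g$. Writing $g$ in the indicator basis, $g(\bz)=\sum_{\bz'\in\{\pm1\}^k} g(\bz')\,\1\{\bz=\bz'\}$, reduces the task to realizing each indicator $\1\{x_{j^*\ldots j^*+k}=\bz'\}$ with the random ReLU features.

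The algebraic core is an exact isolation identity for a \emph{matched} filter. If a filter (row of $\mW$) equals $\bz/k$ for some sign pattern $\bz\in\{\pm1\}^k$, then for every $\bx\in\{\pm1\}^k$ one has $\langle \bz/k,\bx\rangle = 1-2d_H(\bz,\bx)/k$ (with $d_H$ the Hamming distance), which is $1$ when $\bx=\bz$ and at most $1-2/k$ otherwise. Subtracting the bias $1-1/k$ and applying $\sigma_2$ then yields $\sigma_2(\langle\bz/k,\bx\rangle-(1-1/k))=\tfrac1k\1\{\bx=\bz\}$, so any filter whose sign pattern is $\bz$ computes a $\tfrac1k$-scaled indicator of $\bz$. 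These scaled indicators are the only building blocks needed, provided every pattern $\bz$ is realized by at least one of the $q$ random filters.

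For the construction, let $m_{\bz}$ be the number of filters whose sign pattern equals $\bz$ and place the coefficient $k\,g(\bz)/m_{\bz}$ on each such filter in $\bu^{*(j^*)}$; summing over the $m_{\bz}$ matched filters reproduces $g(\bz)\1\{x_{j^*}=\bz\}$, and summing over all $\bz$ gives $h_{\bu^*,\mW}^{\text{STCSA}}(\bx)=f(\bx)$ exactly. To guarantee all $m_{\bz}\ge1$ and to bound the norm I would use a coupon-collector estimate: each $m_{\bz}\sim\mathrm{Binomial}(q,2^{-k})$ has mean $\mu=q2^{-k}$, so a Chernoff lower tail gives $\Pr[m_{\bz}<\mu/2]\le e^{-\mu/8}$ and a union bound over the $2^k$ patterns gives failure probability $\le 2^k e^{-\mu/8}$; setting this $\le\delta$ is precisely $q>2^{k+3}\log(2^k/\delta)$, after which $m_{\bz}\ge q2^{-k-1}$ for all $\bz$ with probability $\ge1-\delta$. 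The norm then satisfies $\|\bu^{*(j^*)}\|^2=\sum_{\bz} m_{\bz}\,(k g(\bz)/m_{\bz})^2=\sum_{\bz} k^2 g(\bz)^2/m_{\bz}\le k^2\,2^k/(q2^{-k-1})=k^2 2^{2k+1}/q$ using $|g|\le1$, giving $\|\bu^{*(j^*)}\|\le 2^{k+1}k/\sqrt q$.

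The main obstacle is making the two halves meet at the exact constants claimed. The isolation identity is exact only because the matched filter is \emph{precisely} $\bz/k$ and the bias is tuned to $1-1/k$, so I must confirm the model supplies this constant/bias term (or an equivalent appended coordinate in the patch); and the norm bound is tight only because the Chernoff step delivers $m_{\bz}\ge q2^{-k-1}$ for \emph{every} pattern simultaneously, which forces the sample-size constant to be exactly $2^{k+3}$. Checking that these choices combine to yield $2^{k+1}k/\sqrt q$ rather than a looser bound is the delicate point; the remaining indicator-basis bookkeeping is routine.
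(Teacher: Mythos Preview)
Your proposal is correct and matches the paper's approach exactly: partition the $q$ random filters by sign pattern $\bz$, apply a Chernoff lower tail plus union bound to secure $|\Omega_{\bz}|\ge q\,2^{-k-1}$ for all $\bz$ simultaneously (this is where the constant $2^{k+3}$ appears), set $u_i^{*(j^*)}=k\,g(\bz)/|\Omega_{\bz}|$ for each $i\in\Omega_{\bz}$ with $\bu^{*(j)}=\0$ elsewhere, and bound $\|\bu^{*(j^*)}\|^2=\sum_{\bz}k^2 g(\bz)^2/|\Omega_{\bz}|\le k^2 2^{2k+1}/q$. On the obstacle you flag, the paper simply asserts the identity $\sigma_2(\langle\bw^{(i)},\bz\rangle)=\tfrac{1}{k}\1\{\sign(\bw^{(i)})=\bz\}$ without your bias shift $-(1-1/k)$, so your treatment is in fact the more careful one.
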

\begin{proof}
    For some $\bz \in \{\pm 1\}^{k}$, then for every $\bw^{(i)} \sim \{\pm 1/k\}^{k}$, we have $ \mmP[\sign (\bw^{(i)})=\bz] = 2^{-k} $.
    Denote by $\Omega_{\bz} \subseteq [q]$ the subset of indexes satisfying $\sign(\bw^{(i)})=\bz$, for every $i\in \Omega_{\bz}$, and note that $\mmE_{\bw} |\Omega_{\bz}| \ge q 2^{-k}$.
    From Chernoff bound:
    \begin{align}
        \mmP[|\Omega_{\bz}| \leq q 2^{-k-1}] \leq e^{-q2^{-k}/8} \leq \delta 2^{-k}
    \end{align}
    by choosing $q>2^{k+3} \log (2^k / \delta)$. Thus, using the union bound with probability at least $1-\delta$, for every $\bz \in \{\pm 1\}^k$ we have $|\Omega_{\bz}|\ge q 2^{-k-1}$. Then, we have
    \begin{align*}
        \sigma_2 \left(\left\langle \bw^{(i)}, \bz \right\rangle \right)= \frac{1}{k} \1 \left\{\sign \left(\bw^{(i)}\right) = \bz\right\}.
    \end{align*}
    Fix some $k$-pattern $f$, where $f(\bx) = g(\bx_{j^* \ldots j^*+k})$.
    For every $i\in \Omega_{\bz}$ we choose $\bu_i^{*(j^*)} = \frac{k}{|\Omega_{\bz}|}g(\bz)$ and $\bu^{*(j)}=0$ for every $j\neq j^*$. Therefore, we have 
    \begin{align*}
        h_{\bu^*, \mW}^{\text{STCSA}} (\bx)
        &= \sum_{j=1}^{n-k} \left\langle {\bu^*}^{(j)}, \sigma_2\left(\left[\mW_o*((\mW_v*\mX) \sigma_1 \left( (\mW_k * \mX)^{\top} (\mW_q * \mX) \right) \right]_j  \right) \right\rangle\\
        &= \sum_{j=1}^{n-k} \left\langle {\bu^*}^{(j)}, \sigma_2\left(\left[(\mW_o*\mW_v)*\mX\right]_j \right)\right\rangle\\
        &=\sum_{j=1}^{n-k} \left\langle {\bu^*}^{(j)}, \sigma_2\left(\bW \bx_{j\ldots j{+}k} \right) \right\rangle \\
        &= \sum_{\bz \in \{\pm 1\}^k} \sum_{i \in \Omega_{\bz}} {\bu^*_i}^{(j^*)} \sigma_2\left(\left\langle \bw^{(i)}, \bx_{j^* \ldots j^*{+}k} \right\rangle \right) \\
        &= \sum_{\bz \in \{\pm 1\}^k} \1\{\bz = \bx_{j^* \ldots j^*{+}k}\} g(\bz) \\
        &= g(\bx_{j^* \ldots j^*{+}k}) \\
        &= f(\bx),
    \end{align*}
    where the first lines follows the definition of $h_{\bu^*, \mW}^{\text{STCSA}} (\bx)$ in Definition \ref{def:k_pattern}.
    The second line is based on the assumption $m=1$ such that $\sigma_1 ((\mW_k * \mX)^{\top} (\mW_q * \mX))=1$ and the property of convolution.
    The third line follows by the fact that there exists a weight $\bW$ such that $\bW \bx_{j^* \ldots j^*{+}k} = \left[(\mW_o*\mW_v)*\mX\right]_j$.
    The fourth line 
    Note that by definition of $\bu^{*(j^*)}$, we have . 
    
    \begin{align*}
        \left\| \bu^{*(j^*)} \right\|^2 = \sum_{\bz \in \{\pm 1\}^k} \sum_{i \in \Omega_{\bz}} \frac{k^2}{|\Omega_{\bz}|^2} \leq  \frac{4(2^k k)^2}{q}.
    \end{align*}
\end{proof}

\newpage
\begin{lemma}\label{lemma:WS_W0}
    Given the learning rate $\eta$ and steps $S$, the norm difference satisfies
    \begin{align*}
        \left\| \bW^{(S)} - \bW^{(0)} \right\| \leq \eta^2 S^2 n k \sqrt{q}.
    \end{align*}
\end{lemma}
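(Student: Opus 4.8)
The plan is to control the distance traveled by the first-layer (convolutional) weights by exploiting the fact that $h_{\bu,\bW}^{\text{STCSA}}$ is \emph{bilinear} in its two trainable pieces: the second layer $\bu$ and the feature map $\sigma_2(\bW\,\cdot)$. The starting point is simply the gradient-descent recursion $\bW^{(s+1)} = \bW^{(s)} - \eta\,\nabla_{\bW}\mL_{f,\mD}(h_{\bu^{(s)},\bW^{(s)}}^{\text{STCSA}})$, which telescopes to $\bW^{(S)}-\bW^{(0)} = -\eta\sum_{s=0}^{S-1}\nabla_{\bW}\mL^{(s)}$. By the triangle inequality, $\|\bW^{(S)}-\bW^{(0)}\| \le \eta\sum_{s=0}^{S-1}\|\nabla_{\bW}\mL^{(s)}\|$, so the whole lemma reduces to a per-step bound on the first-layer gradient norm that is allowed to grow with $s$.

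First I would expand $\nabla_{\bW}\mL^{(s)}$ by the chain rule through the reduced form $h=\sum_j\langle\bu^{(j)},\sigma_2(\bW\bx_{j\ldots j+k})\rangle$ from Lemma~\ref{lemma:separable}. The key structural observation is that differentiating in $\bW$ pulls out a factor of $\bu^{(s)}$: each row gradient has the shape $\mmE_{\bx}[\ell'\sum_j u_i^{(j)}\,\sigma_2'(\langle\bw^{(i)},\bx_{j\ldots j+k}\rangle)\,\bx_{j\ldots j+k}]$. Using $|\ell'|\le1$ for the hinge loss, $|\sigma_2'|\le1$ for the ReLU, the input norm $\|\bx_{j\ldots j+k}\|\le\sqrt{k}$ for Boolean inputs, and Cauchy--Schwarz over the $n-k\approx n$ positions $j$, this yields $\|\nabla_{\bW}\mL^{(s)}\|\le\sqrt{nk}\,\|\bu^{(s)}\|$. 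The first-layer gradient is therefore small precisely when $\bu^{(s)}$ is small.

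Next I would bound how fast $\|\bu^{(s)}\|$ can grow. Since $\bu^{(0)}=0$ and $\bu^{(s+1)}=\bu^{(s)}-\eta\nabla_{\bu}\mL^{(s)}$, the triangle inequality gives $\|\bu^{(s)}\|\le\eta\sum_{t<s}\|\nabla_{\bu}\mL^{(t)}\|$. The second-layer gradient $\nabla_{\bu^{(j)}}\mL=\mmE_{\bx}[\ell'\,\sigma_2(\bW\bx_{j\ldots j+k})]$ is bounded per position because each coordinate $\sigma_2(\langle\bw^{(i)},\bx_{j\ldots j+k}\rangle)\le1$ when $\bw^{(i)}\in\{\pm1/k\}$ and $\bx\in\{\pm1\}$; summing over the $q$ filters and $\approx n$ positions bounds $\|\nabla_{\bu}\mL^{(t)}\|$ by a constant (independent of $t$), so $\|\bu^{(s)}\|$ grows \emph{at most linearly}, i.e. $\|\bu^{(s)}\|\lesssim\eta s\sqrt{nqk}$.

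Combining the two estimates gives $\|\bW^{(S)}-\bW^{(0)}\|\le\eta\sum_{s=0}^{S-1}\sqrt{nk}\,\|\bu^{(s)}\|\lesssim\eta^2\sqrt{nk}\cdot\sqrt{nqk}\sum_{s=0}^{S-1}s\lesssim\eta^2 S^2\,nk\sqrt{q}$, where the arithmetic sum $\sum_s s$ supplies the second factor of $S$, the two gradient-magnitude estimates each contribute one factor of $\eta$, and the product $\sqrt{nk}\cdot\sqrt{nqk}=nk\sqrt{q}$ produces the claimed $n,k,q$ dependence. The main obstacle is exactly this coupled-dynamics step: one must recognize that $\nabla_{\bW}\mL$ is proportional to the current $\|\bu^{(s)}\|$ and that $\bu^{(s)}$, starting from zero, can grow only linearly in the step index. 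A naive uniform gradient bound would give only an $\eta S$ displacement; it is the coupling between the two layers — the $\bW$-gradient being throttled while $\bu$ is still small — that upgrades this to the $\eta^2 S^2$ scaling in the statement. The remaining work is routine bookkeeping of the $n,k,q$ factors through the ReLU derivative and the bounded $\{\pm1\}$ inputs.
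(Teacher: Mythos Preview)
Your proposal is correct and follows essentially the same route as the paper: telescope $\bW^{(S)}-\bW^{(0)}$ via the gradient-descent updates, bound $\|\nabla_{\bW}\mL^{(s)}\|$ by a factor of $\|\bu^{(s)}\|$ using $|\ell'|,|\sigma_2'|\le 1$ and $\|\bx_{j\ldots j+k}\|\le\sqrt{k}$, and then control $\|\bu^{(s)}\|$ from $\bu^{(0)}=0$ together with the per-step bound $\|\nabla_{\bu^{(j)}}\mL\|\le\sqrt{qk}$. The only cosmetic difference is that the paper uses the cruder uniform estimate $\|\bu^{(j,s)}\|\le \eta S\sqrt{qk}$ and sums $S$ copies, whereas you use the step-dependent $\|\bu^{(s)}\|\lesssim \eta s\sqrt{nqk}$ and sum $\sum_{s<S}s$; both land on the same $\eta^2 S^2 nk\sqrt{q}$.
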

\begin{proof}
    Based on the definition of $\mL_{f, \mD}$ and $h_{\bu^{(s)}, \mW^{(s)}}^{\text{STCSA}}$, we have
    \begin{align*}
        \left\| \frac{\partial}{\partial \bu^{(j)}} \mL_{f, \mD}\left(h_{\bu^{(s)}, \mW^{(s)}}^{\text{STCSA}}\right) \right\|
        =& \left\| \mmE_{\bx \sim \mD} \left[ \frac{\partial}{\partial \bu^{(j)}} \ell \left( h_{\bu^{(s)}, \mW^{(s)}}^{\text{STCSA}}, f(\bx) \right) \right]\right\| \\
        =& \left\|\mmE_{\bx \sim \mD} \left[ \sigma_2(\bW^{(s)} \bx_{j \ldots j+k}) \ell' \left( h_{\bu^{(s)}, \mW^{(s)}}^{\text{STCSA}}, f(\bx) \right) \right]\right\| \\
        \leq& \mmE_{\bx \sim \mD} \left[ \left\| \bW^{(s)} \bx_{j \ldots j+k} \right\| \right] \\
        \leq& \sqrt{qk}
    \end{align*}
    From the updates of gradient-descent we have:
    \begin{align*}
        \left\| \frac{\partial}{\partial \bW} \mL_{f, \mD}\left(h_{\bu^{(s)}, \mW^{(s)}}^{\text{STCSA}}\right) \right\| 
        =& \left\| \mmE_{\bx \sim \mD} \left[ \sum_{j=1}^{n-k} \bu^{(j, s)} \bx_{j \ldots j+k}^{\top} \sigma'_2(\bW^{(s)} \bx_{j \ldots j+k}) \ell' \left( h_{\bu^{(s)}, \mW^{(s)}}^{\text{STCSA}}, f(\bx) \right) \right] \right\|\\
        \leq& \left\| \mmE_{\bx \sim \mD} \left[ \sum_{j=1}^{n-k} \left\| \bu^{(j, s)} \right\| \left\| \bx_{j \ldots j+k} \right\| \right] \right\| \\
        \leq& \left\| \mmE_{\bx \sim \mD} \left[ \sum_{j=1}^{n-k} \sqrt{k} \left\| \eta \sum_{s=1}^S \frac{\partial}{\partial \bu^{(j)}} \mL_{f, \mD}\left(h_{\bu^{(s)}, \mW^{(s)}}^{\text{STCSA}}\right) \right\|  \right] \right\| \\
        \leq& \left\| \mmE_{\bx \sim \mD} \left[ \sum_{j=1}^{n-k} \sqrt{k} \eta \sum_{s=1}^S \left\| \frac{\partial}{\partial \bu^{(j)}} \mL_{f, \mD}\left(h_{\bu^{(s)}, \mW^{(s)}}^{\text{STCSA}}\right) \right\|  \right] \right\| \\
        \leq& (n-k)\eta S k\sqrt{q}
    \end{align*}
    By the updates of gradient-descent:
    \begin{align*}
        \left\| \bW^{(S)} {-} \bW^{(0)} \right\| 
        {=}& \left\| \eta \sum_{s=1}^S \frac{\partial}{\partial \bW} \mL_{f, \mD}\left(h_{\bu^{(s)}, \mW^{(s)}}^{\text{STCSA}}\right) \right\| \\
        {\leq}& \eta \sum_{s=1}^S \left\| \frac{\partial}{\partial \bW} \mL_{f, \mD}\left(h_{\bu^{(s)}, \mW^{(s)}}^{\text{STCSA}}\right) \right\| \\
        {\leq}& \eta^2 S^2 n k \sqrt{q}
    \end{align*}
\end{proof}

\newpage
\section{More Experiment Details and Network Architecture} \label{suppsec:exp_detail}
\textbf{More experiment details.}
The batch size is set to be 2 per GPU. 
We use Bicubic down-sampling to get LR images from HR images.
The channel size in each residual block is set to 64.
We set the number of Transformer blocks to be the number of frames.
We randomly crop a sequence of LR image patches with the size of $64{\times}64$.
We augment the training data with random horizontal flips and 90$^\circ$ rotations.
All frames are normalized to the fixed resolution 448$\times$256.
We use the pre-trained SPyNet as our flow estimation module.
Note that the SPyNet in our model is updated in the training.
We use Adam optimizer with $\beta_1{=}0.9, \beta_2{=}0.99$, and train our model with 60w iterations.
Then, we use Cosine Annealing to decay the learning rate from $2{\times}10^{-4}$ to $10^{-7}$.
On the REDS dataset, we set the periods as $[300000, 300000, 300000, 300000]$, the restart weights as $[1, 0.5, 0.5, 0.5]$.
On the Vimeo-90K dataset, we set the periods as $[200000, 200000, 200000, 200000, 200000, 200000]$, the restart weights as $[1, 0.5, 0.5, 0.5, 0.5, 0.5]$.
We use the Charbonnier loss in our method.

\textbf{Network architecture.}
We show the network architecture of the VSR-Transformer in Table \ref{table:vsr_net}.
In the spatial-temporal convolutional self-attention block, we use three independent convolutional neural networks to capture the spatial information of each frame.
Last, we use an output convolutional layer to obtain the final feature map.
In each bidirectional optical flow-based feed-forward block, we use 30 residual blocks for the backward and forward propagation networks, where $N$ is the number of frames. 
Last, we use a fusion layer to fuse the feature maps generated by the backward and forward propagation networks.
Here, the kernel size is $3 \times 3$, the stride is 1 and padding is 1.
The feature extractor has 5 residual blocks.
The resconstruction module has 30 residual blocks.
We use the following abbreviations: $T$: the number of frames, $C$: the number of channels, $H$: the height size of input image, $W$: the width size of input image, I: the the number of input channels, O: the number of output channels,  K: kernel size, S: stride size, P: padding size, G: groups, PixelShuffle: the pixel shuffle with the upscale factor of $2$,
LeakyReLU: the Leaky ReLU activation function with a negative slope of $0.01$. 

\begin{table}[h]
	\small
	\caption{Network architecture of the VSR-Transformer.}
	\label{table:vsr_net}
	\centering
	\resizebox{1\textwidth}{!}{
		\begin{tabular}{c|c|c}
			\hline
			\hline
			\multicolumn{3}{c}{\textbf{Spatial-temporal convolutional self-attention block}} \\
			\hline
			\hline 
			{Part} & {Input $ {\rightarrow} $ Output shape} & {Layer information} \\
			\hline 
			\multirow{1}[0]{*}{Query layer}  
			& $ (T, C, H, W) {\rightarrow} (T, C, H, W) $ & CONV-(I64, O64, K3x3, S1, P1, G64) \\	
			\multirow{1}[0]{*}{Key layer}  
			& $ (T, C, H, W) {\rightarrow} (T, C, H, W) $ & CONV-(I64, O64, K3x3, S1, P1, G64) \\	
			\multirow{1}[0]{*}{Value layer}  
			& $ (T, C, H, W) {\rightarrow} (T, C, H, W) $ & CONV-(I64, O64, K3x3, S1, P1, G64) \\	
			\hline 
			\multirow{1}[0]{*}{CNN layer}  
			& $ (T, C, H, W) {\rightarrow} (T, C, H, W) $ & CONV-(I64, O64, K3x3, S1, P1) \\	
			\hline
			\hline
			\multicolumn{3}{c}{\textbf{Bidirectional optical flow-based feed-forward block }} \\
			\hline
			\hline
			\multirow{1}[0]{*}{\tabincell{c}{Backward }}  
			& $ (T, C{+}3, H, W) {\rightarrow} (T, C, H, W) $ & Residual Block: CONV-(I67, O64, K3x3, S1, P1), LeakyReLU \\
			\hline
			\multirow{1}[0]{*}{\tabincell{c}{Forward}}
			& $ (T, C{+}3, H, W) {\rightarrow} (T, C, H, W) $ & Residual Block: CONV-(I67, O64, K3x3, S1, P1), LeakyReLU \\
			\hline
			\multirow{1}[0]{*}{\tabincell{c}{Fusion}}
			& $ (T, 2C, H, W) {\rightarrow} (T, C, H, W) $ & CONV-(I128, O64, K1x1, S1, P1), LeakyReLU \\
			\hline
			\hline
		\end{tabular}
	}
\end{table}

\begin{table}[h]
	\small
	\caption{Network architecture of the feature extractor and reconstruction network.}
	\label{table:generator_net}
	\centering
	\resizebox{1\textwidth}{!}{
		\begin{tabular}{c|c|c}
			\hline
			\hline
			\multicolumn{3}{c}{\textbf{Feature extractor}} \\
			\hline
			\hline 
			{Part} & {Input $ {\rightarrow} $ Output shape} & {Layer information} \\
			\hline 
			\multirow{1}[0]{*}{Extractor}  
			& $ (T, C, H, W) {\rightarrow} (T, C, H, W) $ & Residual Block: CONV-(I64, O64, K3x3, S1, P1), LeakyReLU \\	
			\hline
			\hline
			\multicolumn{3}{c}{\textbf{Reconstruction network}} \\
			\hline
			\hline
			\multirow{1}[0]{*}{\tabincell{c}{Reconstruction}}  
			& $ (T, C, H, W) {\rightarrow} (T, C, H, W) $ & Residual Block: CONV-(I64, O64, K3x3, S1, P1), LeakyReLU \\
			\hline
			\multirow{2}[0]{*}{\tabincell{c}{Upsampling}}
			& $ (T, C, H, W) {\rightarrow} (T, C, 2H, 2W) $ & CONV-(I64, O256, K3x3, S1, P1), PixelShuffle, LeakyReLU \\
			& $ (T, C, 2H, 2W) {\rightarrow} (T, C, 4H, 4W) $ & CONV-(I64, O256, K3x3, S1, P1), PixelShuffle, LeakyReLU \\
			\hline
			\multirow{1}[0]{*}{\tabincell{c}{CNN layer}}  
			& $ (T, C, 4H, 4W) {\rightarrow} (T, 3, 4H, 4W) $ & CONV-(I64, O3, K3x3, S1, P1), LeakyReLU \\
			\hline
			\hline
		\end{tabular}
	}
\end{table}

\newpage
\section{More Ablation Studies} \label{suppsec:ablation}

\subsection{Effectiveness of Spatial-Temporal Convolutional Self-Attention}
We investigate the effectiveness of spatial-temporal convolutional self-attention (STCSA) layer in our model on REDS.
Specifically, we remove this layer in our model, and evaluate the performance in Table \ref{tab:supp_ablation}.
For the quantitative comparison, the model without the STCSA layer has worse performance than the VSR-Transformer. 
From Figure \ref{fig: supp_ablation}, our model is able to generate HR frames with finer details and sharper edges.
It means that the STCSA layer is important in the VSR-Transformer and it helps to exploit the locality of data and fuse information among different frames.

\begin{table*}[!t]
	\caption{Ablation study (PSNR/SSIM) on \textbf{REDS4} for $4\times$ VSR. } 
	\label{tab:supp_ablation}
	\begin{center}
		\scalebox{0.92}{
			\begin{tabular}{l|c|c|c|c|c}
				\hline
				Method & Clip\_000 & Clip\_011 & Clip\_015 & Clip\_020 & Average (RGB)  \\ \hline
				w/o STCSA & 28.00/0.8247  & 32.00/0.8847  & 34.04/0.9189  & 30.14/0.8889 & 31.05/0.8793   \\
				w/o BOFF  & 27.67/0.8129  & 31.06/0.8683  & 33.39/0.9123  & 29.36/0.8729  & 30.37/0.8666   \\
				train w/ 3 frames & 27.59/0.8152  & 31.44/0.8747  & 33.64/0.9140  & 29.66/0.8809  & 30.58/0.8712   \\
				\hline
				\textbf{VSR-Transformer} &  {\bf{28.06}}/{\bf{0.8267}} & {\bf{32.28}}/{\bf{0.8883}} & {\bf{34.15}}/{\bf{0.9199}} & {\bf{30.26}}/{\bf{0.8912}} & {\bf{31.19}}/{\bf{0.8815}}  \\ \hline
			\end{tabular}
		}
		
	\end{center}
\end{table*}

\begin{figure}[t]
\centering
{
	\includegraphics[width=1\linewidth]{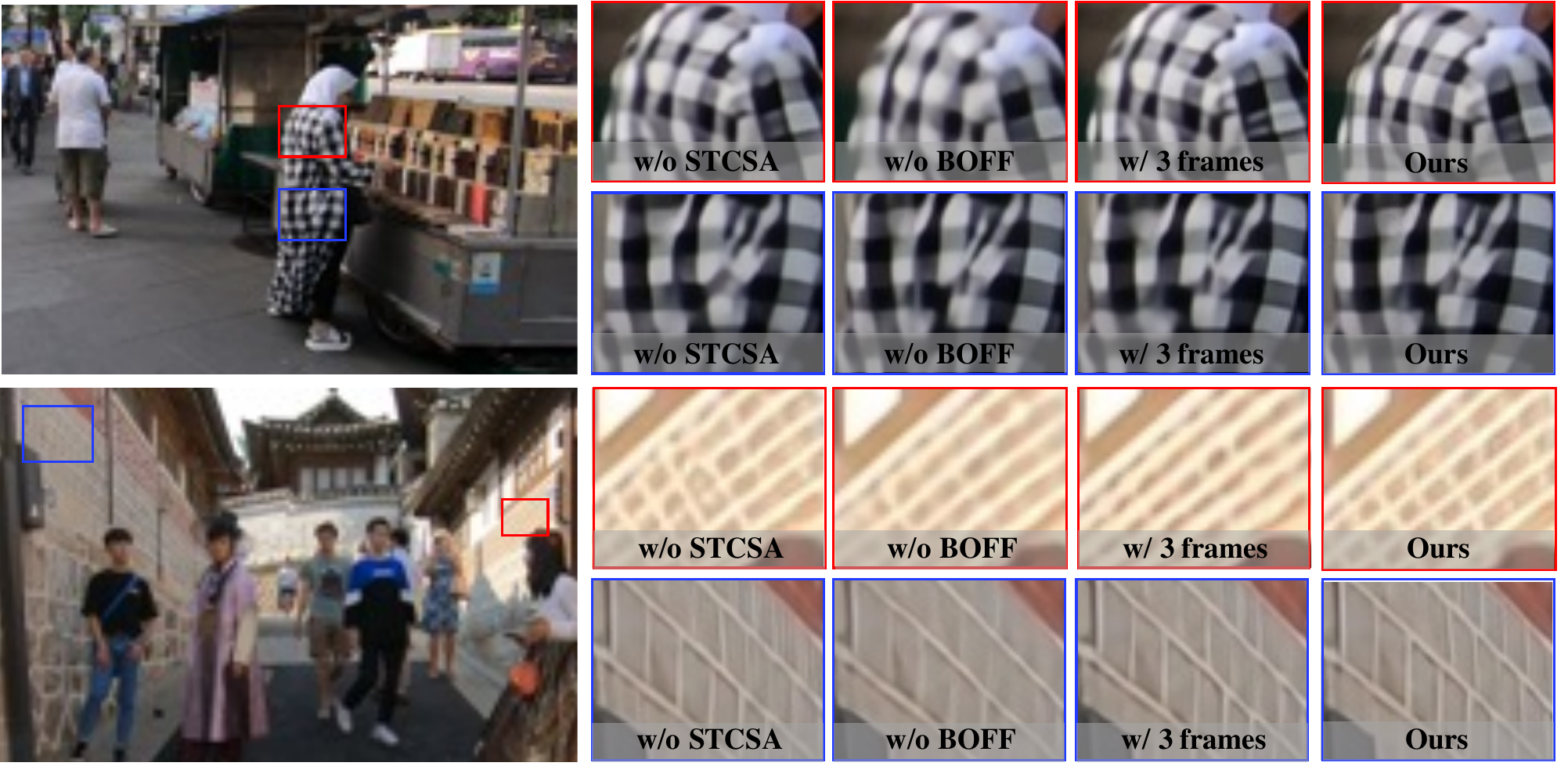}
	\caption{Ablation study on REDS for $4\times$ VSR. Here, w/o STCSA and w/o BOFF mean the VSR-Transformer without the  spatial-temporal convolutional self-attention (STCSA) layer and  bidirectional optical flow-based feed-forward (BOFF) layer, respectively. 
	}
	\label{fig: supp_ablation}
}
\end{figure}

\subsection{Effectiveness of Bidirectional Optical Flow-based Feed-Forward}
We investigate the effectiveness of bidirectional optical flow-based feed-forward (BOFF) and optical flows in our VSR-Transformer on REDS.
By removing this layer, we directly use a stack of Residual ReLU networks in the experiment. 
In Table \ref{tab:supp_ablation}, the model without the BOFF layer has worse performance than the VSR-Transformer.  
From Figure \ref{fig: supp_ablation}, the VSR-Transformer with optical flow is able to generate HR frames with finer details and sharper edges.
It means that the optical flow is important in the BOFF layer and it helps to perform feature propagation and alignments.

\subsection{Impact on Number of Frames}
We investigate the impact on the number of frames when training our VSR-Transformer on REDS.
Specifically, we train our model with 3 frames. 
In Table \ref{tab:supp_ablation}.
Training with small number of frames has degraded performance.
In contrast, our model is able to generate high-resolution frames, as shown in Figure \ref{fig: supp_ablation}.
Therefore, training with more frames helps to restore missing information from other neighboring frames.
In the future, we will train the VSR-Transformer with more frames.

\newpage
\section{More Qualitative Results} \label{suppsec:qualitative}
\subsection{Results on REDS4}

\begin{figure}[h]
\centering
{
	\includegraphics[width=1\linewidth]{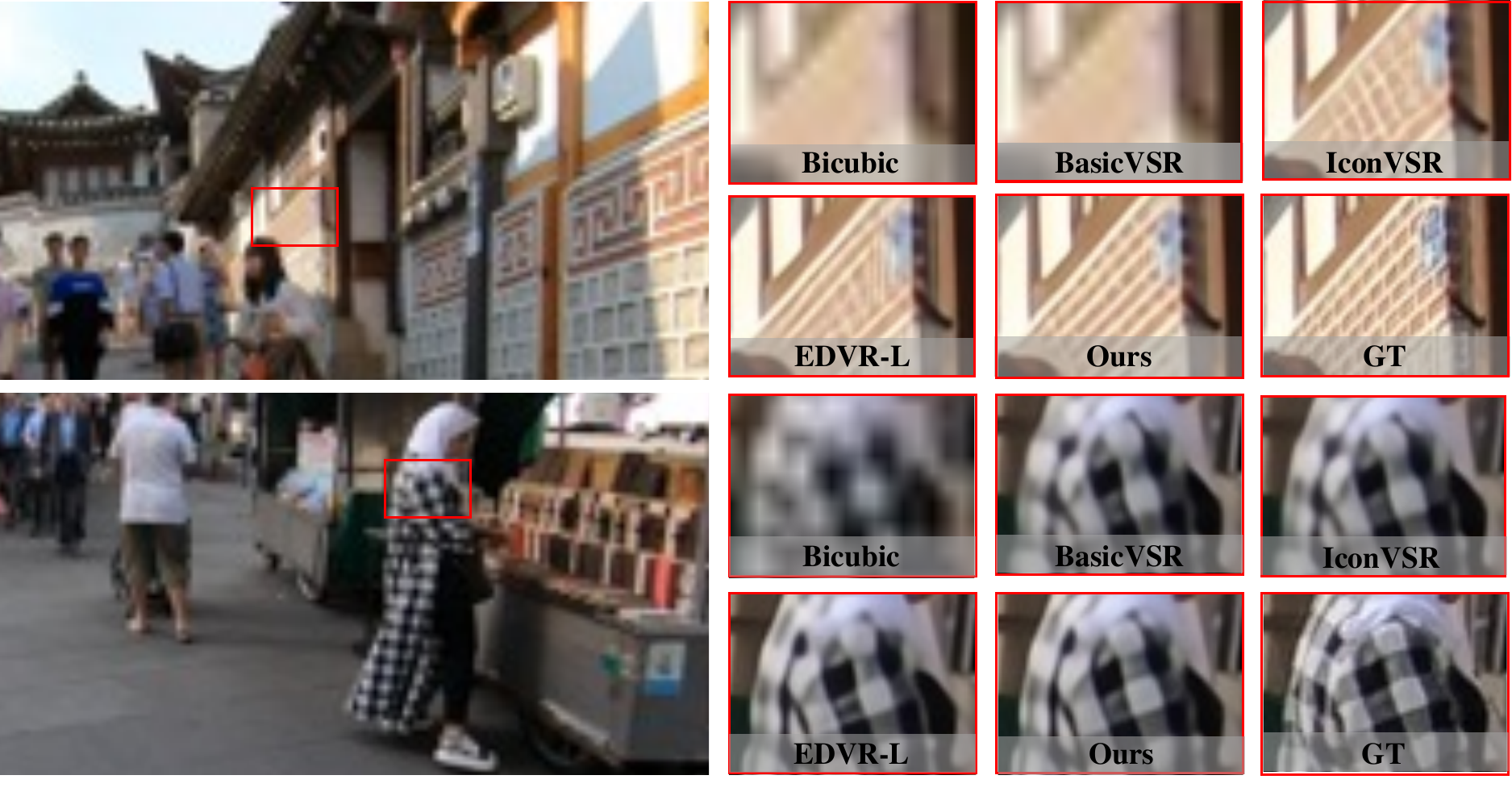}
	\caption{Qualitative comparison on the REDS4 dataset for $4\times$ VSR. Zoom in for the best view.
	}
	\label{fig: supp_reds}
}
\end{figure}

\subsection{Results on Vimeo-90K}
\begin{figure}[h]
\centering
{
	\includegraphics[width=1\linewidth]{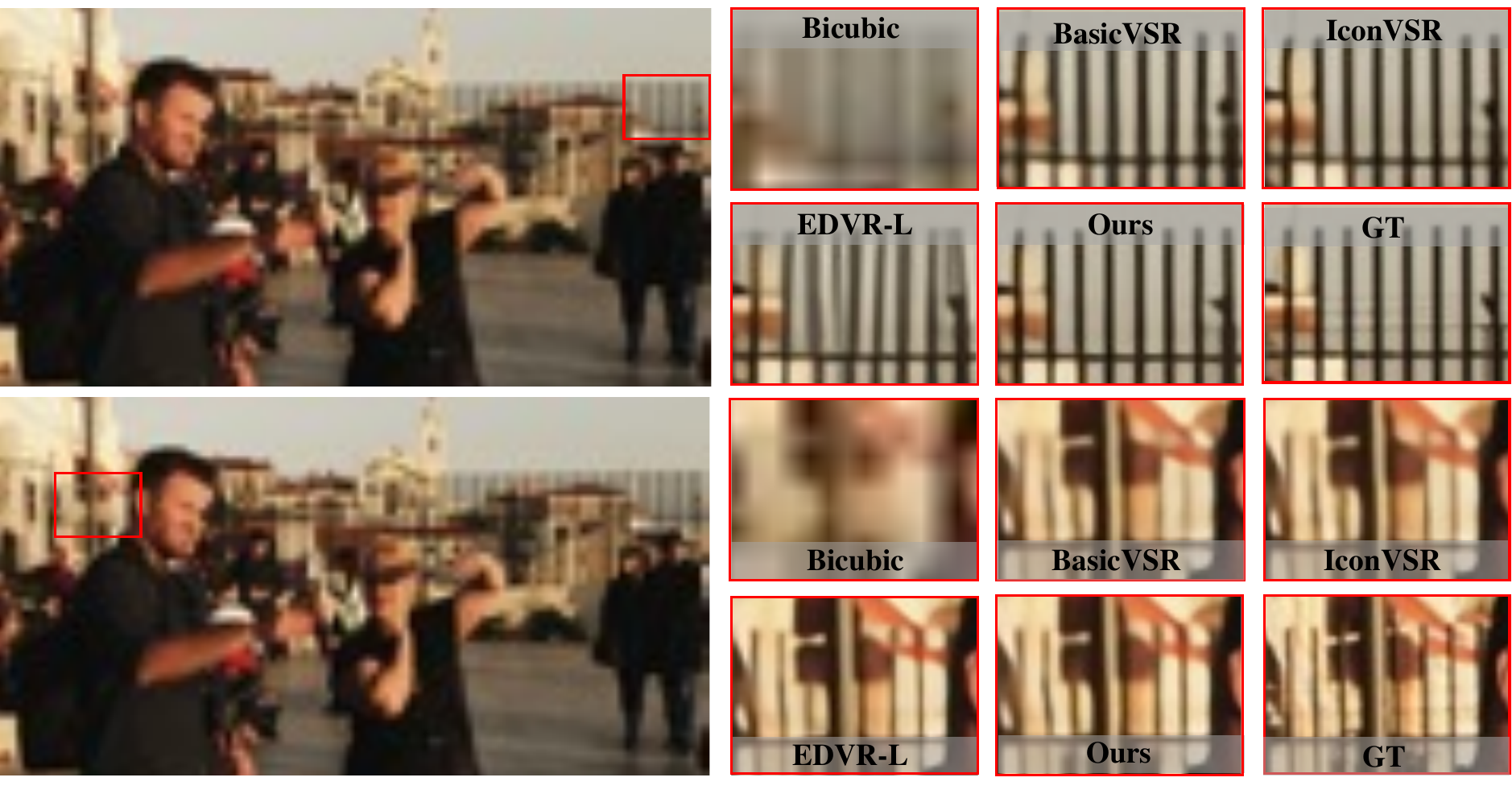}
	\caption{Qualitative comparison on Vimeo-90K-T for $4\times$ VSR. Zoom in for the best view.
	}
	\label{fig: supp_vimeo}
}
\end{figure}

\newpage
\subsection{Results on Vid4}
\begin{figure}[h]
\centering
{
	\includegraphics[width=1\linewidth]{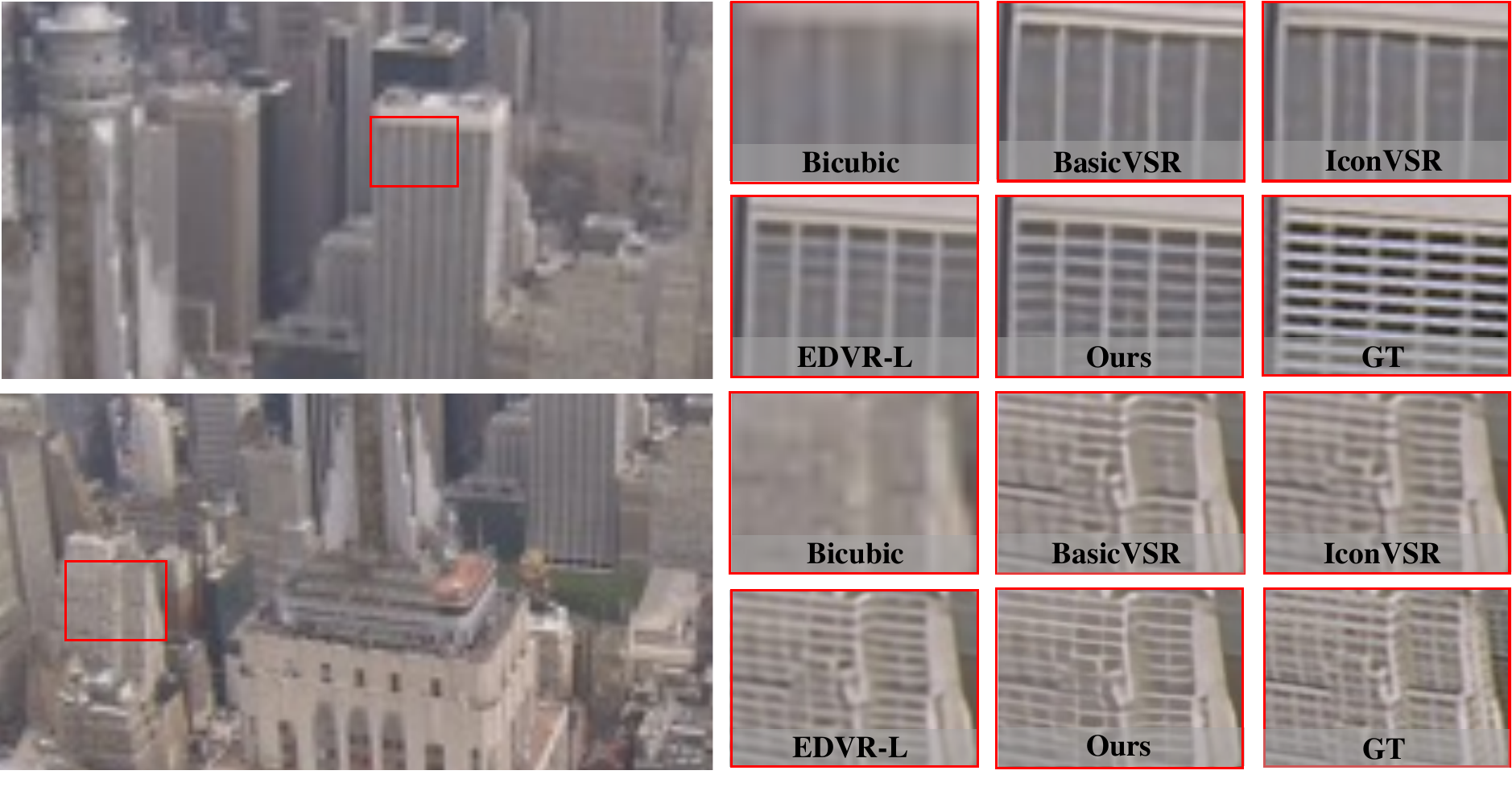}
	\caption{Qualitative comparison on Vid4 for $4\times$ VSR. Zoom in for the best view.
	}
	\label{fig: supp_vid}
}
\end{figure}

\section{More Results on Vid4}\label{suppsec:vid4}
From Table \ref{tab:supp_sr_vid4}, the VSR-Transformer achieves comparable PSNR and SSIM compared with EDVR-L. 
In contrast, BasicVSR and IconVSR are much worse than EDVR and our method since the number of frames is small. 
Note that Table \ref{tab:supp_sr_vid4} shows that BasicVSR and IconVSR are trained and tested on 7 frames, which is different from Table \ref{tab:sr_vid4} in the paper.
It implies that BasicVSR and IconVSR largely relies on the aggregation of long-term sequence information, and thus have poor performance especially when both training and testing on small number of frames.
These results verify the effectiveness and the generalization ability of our model.

\begin{table*}[h]
	\caption{Quantitative comparison (PSNR/SSIM) on \textbf{Vid4} for $4\times$ VSR. {Red} and {blue} indicate the best and the second best performance, respectively. Y denotes the evaluation on Y channels. `$\dagger$' means a method trained and tested on 7 frames for a fair comparison.}
	\label{tab:supp_sr_vid4}
	\begin{center}
		\tabcolsep=0.06cm
		\scalebox{0.9}{
		\begin{tabular}{l|c|c|c|c||c}
			\hline
			{Methods} & {Calendar (Y)} & {City (Y)} & {Foliage (Y)} & {Walk (Y)} & {Average (Y)} 
			\\
			\hline
			EDVR-L~\cite{ref_wang2019edvr}       &  \blue{\bf{24.05}}/\red{\bf{0.8147}} & \red{\bf{28.00}}/\red{\bf{0.8122}} & \red{\bf{26.34}}/\blue{\bf{0.7635}} & \blue{\bf{31.02}}/\blue{\bf{0.9152}} & \blue{\bf{27.35}}/\red{\bf{0.8264}} \\
			\hline
			BasicVSR$\dagger$~\cite{chan2020basicvsr}&   23.57/0.7905  &  27.60/0.7931    &   26.02/0.7485    &   30.42/0.9049   & 26.91/0.8093  \\
			IconVSR$\dagger$~\cite{chan2020basicvsr}&  23.76/0.7984  &  27.70/0.7997    &   26.13/0.7508    &   30.54/0.9072   & 27.04/0.8140  \\
			\hline
			\bf{VSR-Transformer (Ours)} &  \red{\bf{24.08}}/\blue{\bf{0.8125}}   &   \blue{\bf{27.94}}/\blue{\bf{0.8107}}    &   \blue{\bf{26.33}}/\red{\bf{0.7635}}    &   \red{\bf{31.10}}/\red{\bf{0.9163}}   &   \red{\bf{27.36}}/\blue{\bf{0.8258}}    \\
			\hline
		\end{tabular}
		}
	\end{center}
\end{table*}

\newpage

\end{document}